\documentclass{article}
\usepackage[margin=1in]{geometry}

\usepackage[hyphens]{url}
\usepackage{graphicx}
\usepackage{natbib}
\usepackage{caption}
\usepackage{mathtools} 
\usepackage{amsfonts}
\usepackage{amssymb}
\usepackage{amsthm}

\usepackage{subcaption}
\usepackage{booktabs} 
\usepackage{cleveref} 

\usepackage{algorithm}
\usepackage{algorithmic}

\usepackage{tikz} 
\usepackage{xcolor}
\usepackage{microtype}
\usepackage[section]{placeins}

\newtheorem{remark}{Remark}
\newtheorem{example}{Example}
\newtheorem{proposition}{Proposition}

\newtheorem{corollary}{Corollary}

\title{Parallel Noising in Neural Markov Logic Networks}

\author{
    Peter~Jung\textsuperscript{1},
    Giuseppe~Marra\textsuperscript{2},
    Ond\v{r}ej~Ku\v{z}elka\textsuperscript{1}
    \\[0.6em]
    \textsuperscript{1}Czech Technical University, Prague, Czech Republic\\
    \textsuperscript{2}Department of Computer Science, KU Leuven, Leuven, Belgium
}

\begin{document}
\maketitle

\begin{abstract}
 Neural Markov Logic Networks (NMLNs) are a flexible neurosymbolic relational model.
Previous work has shown that, although NMLNs achieve strong performance as generative models for small relational structures, they underperform diffusion-based generative graph models on larger structures. In this paper, we strengthen NMLNs along two main dimensions: (i) we increase the expressive capacity of their potential functions using graph neural networks, and (ii) we develop a new training and inference algorithm inspired by parallel-tempering Markov chain Monte Carlo methods, which we name \textit{parallel noising}. Together, these enhancements enable NMLNs to attain strong performance in graph generation relative to general diffusion-based generative graph models. Furthermore, they allow NMLNs to match the performance of specialized text-based recurrent models when generating small molecular structures.
\end{abstract}

\section{Introduction}

Neural Markov Logic Networks (NMLNs) are a flexible neurosymbolic relational model that combines the maximum-entropy semantics of Markov logic \citep{richardson2006markov} with learnable (neural) relational potentials \citep{marra2021neural,jung2024quantified}.
A persistent obstacle for using NMLNs in practice is \emph{inference}: likelihood training and downstream tasks require sampling from high-dimensional, multi-modal distributions over possible worlds, yet classical single-chain Gibbs sampling mixes poorly and is brittle in the presence of near-deterministic constraints \citep{marra2021neural}.

A natural first attempt to improve mixing is \emph{parallel tempering} (also known as \emph{replica exchange} \citep{swendsen1986replica,geyer1991markov,hukushima1996exchange}), which runs multiple chains targeting related ``flattened'' distributions and swaps their states so that ``hot'' chains can help the ``cold'' chain cross energy barriers.
In practice, however, as also confirmed by our initial experiments, temperature ladders can be brittle for NMLNs: the effective energy scale depends on model parameters and can drift substantially during training, so a ladder that yields good swap acceptance early on may quickly become ineffective.

We therefore propose \emph{parallel noising}: a replica-exchange MCMC scheme for NMLNs that replaces temperature ladders with a ladder of \emph{data-space corruptions}.
Concretely, we fix Bernoulli corruption levels $0\leq \nu_N<\cdots<\nu_1\leq 1$ and define a ladder of intermediate targets indexed by these noise rates.
During inference, each level performs within-level MCMC updates and we \emph{regularly} attempt swaps of adjacent levels using the standard replica-exchange Metropolis ratio, evaluated under the (unnormalized) NMLN log-potentials at the two noise levels rather than under temperature-rescaled energies.
The lowest-noise chain targets the desired NMLN, while higher-noise chains are easier to explore and help the sampler cross energy barriers.
Parallel noising offers two advantages that we make precise:
(i) it is an \emph{exact} replica-exchange algorithm for an arbitrary ladder of target distributions, so the lowest-noise marginal is the desired target at stationarity (Proposition~\ref{prop:pn-correctness}); and
(ii) because the ladder is defined by a fixed corruption operator rather than an energy rescaling, it remains well-behaved under \emph{time-varying} targets (e.g., during training), where temperature schedules are difficult to tune (Appendix \ref{app:toy-boolean-tracking} (toy example)).

\paragraph{Contributions.}
\begin{itemize}
    \item \textbf{Parallel noising for NMLNs:} we introduce a corruption-defined replica-exchange sampler that stays well-behaved under time-varying targets during learning, and combine it with global GNN energies to form NMLN*.
    \item \textbf{Why it works:} we characterize swap acceptance via distributional overlap and derive simple, distribution-free bounds for Bernoulli noising schedules.
    \item \textbf{Empirical impact:} NMLN* substantially improves mixing and sample quality on molecular graph generation benchmarks compared to prior NMLNs and strong generative baselines (Sec.~\ref{sec:experiments}).
\end{itemize}

\paragraph{Organization.}
Section~\ref{sec:preliminaries} reviews replica-exchange MCMC and the NMLN learning objective.
Section~\ref{sec:nmln} defines NMLNs and the Bernoulli corruption operator used to build a noise ladder.
Section~\ref{sec:limitations} summarizes the main bottlenecks of existing NMLNs---limited expressiveness of fragment-based potentials and slow-mixing Gibbs inference---and motivates our two remedies.
Section~\ref{sec:pn} introduces NMLN* with global GNN energies and the parallel noising sampler.
Section~\ref{sec:pn-theory} develops theoretical properties and connects parallel noising to parallel tempering.
Section~\ref{sec:experiments} presents experiments.

\section{Related Work}\label{sec:related}
\paragraph{Replica-exchange MCMC and tempering.}
Replica exchange (parallel tempering) couples chains targeting a ladder of related distributions and proposes swap moves so that exploratory (``hot'') chains help the target (``cold'') chain traverse energy barriers \citep{swendsen1986replica,geyer1991markov,hukushima1996exchange}.
Most practical variants define the ladder by \emph{temperature scaling}, and performance depends on maintaining sufficient overlap between adjacent temperatures.
For discrete energy-based models, this can be brittle because the effective energy scale can drift during learning, motivating adaptive and non-reversible tempering variants \citep{desjardins2010tempered,miasojedow2013adaptivept,syed2022non}.

\paragraph{Corruption/noising for discrete structures.}
Corruption operators are widely used to smooth discrete distributions and stabilize learning, including in denoising/diffusion models for graphs and molecules \citep{digress}.
In NMLNs, injecting noise during training was introduced as a pragmatic way to avoid near-deterministic constraints that can break Gibbs sampling \citep{marra2021neural}.
Our contribution is to use corruption to \emph{define} a replica-exchange ladder: PN remains an \emph{exact} sampler for the chosen intermediate targets, and these targets remain meaningful even when the underlying model evolves during training.

\paragraph{Neurosymbolic probabilistic logic and inference.}
NeSy systems include constraint- and fuzzy-logic approaches \citep{diligenti2017semantic,van2022analyzing,badreddine2022logic,xu2018semantic} and probabilistic-logic approaches \citep{manhaeve2018deepproblog,marra2020relational,li2023scallop,ahmed2022semantic,winters2022deepstochlog,van2025neurosymbolic}.
Within probabilistic NeSy, undirected energy-based models---including Markov logic networks \citep{richardson2006markov} and probabilistic soft logic models \citep{pryor2023neupsl}---are particularly relevant here; relational neural machines provide another closely related energy-based formulation \citep{marra2020relational}.
NMLNs \citep{marra2021neural,jung2024quantified} sit in this family but replace hand-specified first-order rules with neural potentials that can \emph{learn} soft constraints from data.

Exact inference in probabilistic NeSy systems such as DeepProbLog uses weighted model counting \citep{manhaeve2018deepproblog}, which is $\#\mathrm{P}$-hard in general \citep{abboud2020learning}. To scale inference and learning, practical systems use several distinct computational strategies: A*-like search over high-probability proofs \citep{manhaeve2021approximate}; provenance-semiring-based differentiable reasoning \citep{li2023scallop}; learned neural approximations to symbolic inference \citep{van2023nesi}; neural approximate model counting \citep{abboud2020learning}; sampling-based learning objectives \citep{verreet2024explain}; specialized gradient estimators for categorical random variables \citep{de2023differentiable}; hybrid approximate Bayesian inference, automated reasoning, and gradient estimation for sequential NeSy models \citep{desmet2025relational}; and general MCMC methods \citep{robert1999monte}, including Gibbs sampling in NMLNs \citep{marra2021neural}.
Our focus is on making MCMC-style inference effective even when the learned energy encodes highly deterministic structure.

\paragraph{Generative NeSy models.}
A subset of NeSy models extends these frameworks to generative tasks, e.g., via constrained deep generative models, including GANs \citep{di2020efficient,stoian2024realistic}, VAEs \citep{misino2022vael}, sequential neurosymbolic Markov models \citep{desmet2025relational}, or diffusion models \citep{huang2024symbolic,scassola2023zero}.
These methods span heterogeneous output domains, including tabular records, symbolic music, sequential latent states, and perceptual data. By contrast, NMLNs directly define joint distributions over complete symbolic worlds and can be used as generative models for graphs and relational databases. \citet{van2025neurosymbolic} use diffusion to model dependencies among symbolic concepts in perceptual NeSy pipelines, whereas NMLNs directly model complete symbolic worlds.

\section{Preliminaries}\label{sec:preliminaries}
We briefly review replica-exchange MCMC (parallel tempering) notation and highlight the acceptance mechanism that underlies PN.
The key idea is to run a \emph{ladder} of related MCMC chains, each targeting a related distribution, and allow state swaps so that exploratory chains help the target-level chain cross energy barriers.
A more detailed review (including an illustrative example) is in Appendix~\ref{sec:parallel_tempering_app}.

\subsection{Replica Exchange and Parallel Tempering}\label{sec:parallel_tempering}
Replica exchange runs $N$ Markov chains targeting a ladder of distributions $\{\pi_i\}_{i=1}^N$ and regularly proposes swaps between adjacent levels.
Writing $\pi_i(\omega)\propto \exp(\rho_i(\omega))$, the swap of states $(\omega_i,\omega_{i+1})$ is accepted with probability
\begin{equation}\label{eq:pt-swap}
\begin{aligned}
\alpha_{i,i+1}
&= \min\Big\{1,\exp\big(
\rho_i(\omega_{i+1})-\rho_i(\omega_i) \\
&\qquad\qquad\quad
+\rho_{i+1}(\omega_i)-\rho_{i+1}(\omega_{i+1})
\big)\Big\}.
\end{aligned}
\end{equation}

Overlaps between adjacent levels govern swap rates: when $\pi_i$ and $\pi_{i+1}$ concentrate on very different regions, swaps are rarely accepted and the ladder decouples.
Practical schedules therefore aim to keep adjacent targets sufficiently close (e.g., by geometric temperature spacing in PT or by gradual corruption increments in PN).

\section{Neural Markov Logic Networks}\label{sec:nmln}

\subsection{Model definition and learning}

In Neural Markov Logic Networks, we consider a function-free first-order logic language $\mathcal{L}$, which is built from a set of constants $\mathcal{C}_\mathcal{L}$ and predicates $\mathcal{R}_\mathcal{L} = \bigcup_i \mathcal{R}_i$, where $\mathcal{R}_i$ contains the predicates of arity i.
For $c_1, c_2, \dots, c_m \in \mathcal{C}_\mathcal{L}$ and $R\in\mathcal{R}_m$, we call $R(c_1, c_2, \dots, c_m)$ a \textit{ground atom}.
We define \textit{possible world} $\omega$ to be the pair $(\mathcal{C}, \mathcal{A})$, where $\mathcal{C} \subseteq \mathcal{C}_\mathcal{L}$, $\mathcal{A}$ is a subset of the set of all ground atoms that can be built from the constants in $\mathcal{C}$ and any relation in $\mathcal{R}_\mathcal{L}$. We define
$\Omega_\mathcal{L}$ to be the set of all possible worlds over $\mathcal{L}$.
Intuitively, a given possible world defines a set of $true$ facts one can state using the constants (entities) and the relations of the language~$\mathcal{L}$.

Let $\Phi(\omega;\mathbf{w}) : \Omega_\mathcal{L} \to \mathbb{R}$ be a parametric relational potential implemented by a relational neural network that maps a possible world $\omega$ to a scalar score.

Given a set of potential functions $\Phi_1$, $\dots$, $\Phi_m$, a \textit{neural Markov logic network (NMLN)} \citep{marra2021neural} is the parametric exponential-family distribution over possible worlds from a given $\Omega_{\mathcal{L}}$:
\[P(\omega) = \frac{1}{Z} \exp{\left(\sum_{i} \beta_i \Phi_i(\omega; \mathbf{w}_i) \right)},\]
where $\beta_i$ and $\mathbf{w}_i$  are parameters and $Z = \sum_{\omega \in \Omega_\mathcal{L}} \exp{\left(\sum_{i} \beta_i \Phi_i(\omega; \mathbf{w}_i) \right)}$ is the normalization constant (partition function).
Given a training set of possible worlds $\mathcal{Y}$, NMLNs can be learned by maximizing the following log-likelihood by some gradient-based method \citep{marra2021neural}:
\begin{align}
\label{eq:max_likelihood}
\max_{\mathbf{w}_i, \beta_i} \left\{ \sum_{\hat{\omega} \in \mathcal{Y}} \sum_{i=1}^m \beta_i \Phi_i(\widehat \omega; \mathbf{w}_i) - \log Z \right\}.
\end{align}

The gradient of the log-likelihood function of NMLNs, given the data $\widehat{\omega}$ (here $\widehat{\omega}$ is an example from the training data, represented as a possible world) takes the form:

\begin{align}
\label{eq:derivative_w}
\frac{\partial \log P(\widehat \omega)}{\partial w_{i,j}} & =  \beta_i  \left( \frac{\partial \Phi_i(\widehat \omega; \mathbf{w}_i)}{\partial w_{i,j}}   -  \mathbb{E}_{ \omega \sim  P}\!\left[\frac{\partial \Phi_i(\omega;\mathbf{w}_i)}{\partial w_{i,j}}\right] \right)\\
\label{eq:derivative_beta}
\frac{\partial \log P(\widehat \omega)}{\partial \beta_i} & = \Phi_i(\widehat \omega; \mathbf{w}_i)  -  \mathbb{E}_{ \omega \sim  P}\!\left[\Phi_i(\omega; \mathbf{w}_i)\right]
\end{align}

Hence, what we need to be able to compute in order to perform the gradient-based learning are the four types of quantities: $\Phi_i(\widehat \omega; \mathbf{w}_i)$, $\frac{\partial \Phi_i(\widehat \omega; \mathbf{w}_i)}{\partial w_{i,j}}$, $\mathbb{E}_{ \omega \sim  P}\!\left[\frac{\partial \Phi_i(\omega;\mathbf{w}_i)}{\partial w_{i,j}}\right]$, and $\mathbb{E}_{ \omega \sim  P}[\Phi_i(\omega; \mathbf{w}_i)]$. The first one can be computed by evaluating the potential function $\Phi_i$ on the training example $\widehat{\omega}$, the second can also be obtained jointly for all $j$ using the standard backpropagation algorithm. The remaining two are more difficult to compute because they involve expectation over samples from the distribution given by the current weights. We can use sampling to approximate these expectations. In particular,  Gibbs Sampling is used in the original version of NMLNs, where it is used not only for computing the gradients but also for predictions---both for marginal and conditional inference (i.e., computing marginal and conditional probabilities) and for sampling in generative settings.

A significant drawback of Gibbs sampling is that it often requires a large number of steps before converging to the target distribution \citep{casella_book}.

\subsection{Bernoulli corruption and corruption ladders}\label{sec:noising}

A practical difficulty for MCMC in relational models is the prevalence of near-deterministic structure: hard or almost-hard constraints can create extremely sharp modes separated by regions of negligible probability mass. In such regimes, single-chain Gibbs sampling mixes poorly, because most local moves are rejected or trapped in a narrow basin.

We therefore introduce a \emph{Bernoulli corruption operator} on possible worlds. Fix a domain size and identify a possible world $\omega$ with the binary vector of its ground atoms in a fixed vocabulary; let $d$ denote the number of such atoms. For a noise parameter $\nu\in[0,1]$, define the bit-flip channel $K_\nu$ by independently flipping each atom with probability $\nu$. That is, for worlds $\omega,\omega'\in\Omega$,
\begin{equation}
K_\nu(\omega\to \omega') \;=\; \prod_{j=1}^{d}\Big[(1-\nu)\mathbf{1}\{\omega'_j=\omega_j\}
+\nu\,\mathbf{1}\{\omega'_j\neq\omega_j\}\Big].
\end{equation}
For any target distribution $\pi$ on $\Omega$ (e.g., the NMLN defined by learned potentials), the \emph{$\nu$-noised} distribution is the pushforward
\[
\pi_\nu(\omega') \;=\; \sum_{\omega\in\Omega}\pi(\omega)\,K_\nu(\omega\to \omega').
\]
\paragraph{Corruption for molecular graphs.}
The theoretical development treats $\omega$ as a binary vector of ground atoms. In the molecule experiments, each molecule is represented as a finite relational structure over a fixed vocabulary of predicates encoding (i) node labels (atom types) and (ii) edge labels (bond types). Concretely, we use unary predicates $\mathsf{Atom}_t(v)$ for atom type $t$ at node $v$ and binary predicates $\mathsf{Bond}_b(u,v)$ for bond type $b$ on an (undirected) edge $\{u,v\}$. Categorical attributes are encoded by one-hot groups of ground atoms.
Our corruption operator acts independently \emph{across} these groups but preserves the one-hot constraints \emph{within} each group: with probability $1-\nu$ we keep the current category, and with probability $\nu$ we resample a category uniformly from the remaining options.\footnote{Equivalently, this is a simple symmetric categorical channel; the bit-flip channel is recovered for truly binary predicates.} For undirected bonds we corrupt only pairs with $u{<}v$ and mirror the result to enforce symmetry.

We do not hard-enforce chemical validity (valence constraints, aromaticity, etc.) during MCMC; instead, validity is an \emph{evaluation} property (Section~\ref{sec:experiments}). In practice, the learned energy assigns low probability to invalid structures once such constraints are captured by the potentials.

Intuitively, increasing $\nu$ smooths the distribution and reduces the severity of narrow deterministic basins. In Section~\ref{sec:pn}, we use these noised distributions to construct a ladder of intermediate targets for replica exchange.

\section{Limitations of Existing NMLNs}\label{sec:limitations}
Section~\ref{sec:nmln} defined NMLNs and reviewed the Bernoulli corruption operator that has been used in prior work to smooth otherwise sharp energies.
While NMLNs are conceptually flexible, existing instantiations and learning pipelines still run into two bottlenecks that have limited their performance on larger structured domains:
(i) \emph{local potential families} whose effective receptive field is controlled by a fragment width parameter, and
(ii) \emph{sampling-based inference} that relies on Gibbs-style local updates and becomes brittle as the learned energy approaches deterministic constraints.
These limitations motivate two changes: GNN-based relational potentials and the replica-exchange sampler which we describe in Section~\ref{sec:pn}.

\subsection{Limited Expressive Power}\label{sec:limited_expressiveness}

In the original NMLN formulation \citep{marra2021neural} (and its recent extensions \citep{jung2024quantified}), potentials are built by aggregating scores of small induced fragments.
This also means that capturing long-range relational regularities (e.g., connectivity or path-based constraints in graphs) typically requires increasing the fragment width, quickly leading to $O(n^k)$ fragments and correspondingly expensive computation.

This intuition is formalized by \citet{jung2024quantified}: they show (Theorem~1 in \citep{jung2024quantified}) that NMLNs with potential functions of width $k$ can represent the same distributions as classical Markov Logic Networks \citep{richardson2006markov} whose formulas use at most $k$ variables, but \emph{without} quantifiers or constants.
The restriction to quantifier-free, bounded-variable logic is limiting in practice; for example, even simple global graph properties such as ``no isolated vertices'' cannot be captured with constant width (see \citep{jung2024quantified} for additional examples).

The work of \citet{jung2024quantified} also proposes more expressive variants and relates them to Quantified Markov Logic Networks \citep{DBLP:conf/kr/Gutierrez-Basulto18a}, which allow $\exists/\forall$ quantifiers in prenex normal form with at most $k$ variables.
However, as soon as the desired dependency spans longer paths, prenex restrictions again force $k$ to grow, and the model size and computation scale in the same prohibitive way as increasing fragment width.

These results motivate potential families with \emph{hierarchical} aggregation and an expanding receptive field at fixed parameter size.

In short, fragment-based potential families face a hard tradeoff between expressivity and scalability: capturing long-range relational structure typically forces the fragment width to grow, leading to an $O(n^k)$ blow-up in the number of induced fragments and in the cost of evaluating the energy.

\subsection{Inference (Sampling)}\label{sec:problems_with_inference}

The second limitation, already highlighted in the introduction, is inference.
Exact marginal inference and exact sampling in NMLNs are generally intractable under standard complexity assumptions.\footnote{This follows from the complexity of first-order model counting \citep{beame2015symmetric}, since WFOMC can be solved given access to an oracle for marginal inference in NMLNs. This follows the same reasoning as similar arguments done before for Markov Logic Networks in the literature, exploiting the fact that classical MLNs can be represented as NMLNs with suitably chosen potential functions.}
Accordingly, existing NMLN implementations rely on (variants of) Gibbs sampling in practice.\footnote{\citet{marra2021neural} also introduce variants of blocked Gibbs sampling to speed up inference, but their implementation still relies on Gibbs-style local updates.}

Unfortunately, Gibbs mixes poorly for the kinds of rugged, multi-modal energies that arise once the model starts to encode near-deterministic constraints.
In classical MLNs one can sometimes handle hard constraints using MC-SAT \citep{DBLP:conf/aaai/PoonD06}, but MC-SAT requires an explicit logical encoding of the deterministic clauses.
In NMLNs, constraints are encoded \emph{implicitly} by neural potentials, so this route is unavailable.

The original NMLN paper therefore injected noise during training: at the beginning of each epoch, each ground atom of a training world is flipped with probability $\pi_n$ \citep{marra2021neural}.
While this can make Gibbs sampling feasible, it couples modeling and inference: the model is optimized for a \emph{corrupted} data distribution, and truly deterministic structure can only be approximated.
As a result, the amount of corruption becomes a brittle hyperparameter and sample quality degrades when the task requires crisp constraints.

\subsection{Transient corruption as an auxiliary ladder}\label{sec:pn-overview}
This motivates using the corruption operator $K_\nu$ (Section~\ref{sec:noising}) only as a \emph{transient exploration device}:
we keep the (nearly) clean NMLN at the lowest noise level $\nu_N$ as the learning target---typically $\nu_N=0$, but it can also be a small positive noise---and use higher corruption levels to define a sequence of auxiliary intermediate distributions that are easier to mix.
In parallel, we alleviate the expressivity bottleneck with global GNN energies (Section~\ref{sec:gnn-potentials}).
Building on these two ingredients, \emph{parallel noising} runs a replica-exchange sampler across noise levels and swaps states, allowing highly corrupted chains to traverse modes while preserving the lowest-noise chain as an exact sampler of the desired target distribution (Section~\ref{sec:pn-sampler}).

In our experiments we set the target level to a small but nonzero noise, $\nu_N=10^{-3}$, so the lowest-noise chain samples from a lightly corrupted target. Setting $\nu_N=0$ recovers the fully clean NMLN when desired. More generally, one can choose $\nu_N$ to be any sufficiently small positive noise if the application calls for a slightly smoothed target.
Higher-noise levels are auxiliary and exist solely to accelerate exploration of the target chain.

\section{NMLN*: Global Potentials and Parallel Noising}\label{sec:pn}

We propose \emph{NMLN*}, an NMLN variant that addresses the two bottlenecks from Section~\ref{sec:limitations}: (i) limited expressive power of fragment-based potentials, and (ii) poor mixing of Gibbs-style inference on sharp, multi-modal energies.
NMLN* combines two orthogonal changes: \emph{global} graph neural network (GNN) energies (Section~\ref{sec:gnn-potentials}) and a corruption-defined replica-exchange sampler that we call \emph{parallel noising} (Sections~\ref{sec:pn-sampler}--\ref{sec:pn-schedule}).

\subsection{Global potentials via GNNs}\label{sec:gnn-potentials}
Prior NMLN variants commonly rely on \emph{local} fragment-based potentials (e.g., DeepSet-style aggregation), which can miss global structural signals needed for larger relational structures (and larger molecules).
We therefore instantiate $\Phi(\omega;\mathbf{w})$ as a \emph{global} GNN energy: we convert a world $\omega$ into a labeled graph whose node/edge labels correspond to unary/binary predicates (atom/bond types in molecules), apply message passing with relation-specific parameters (e.g., an R-GCN-like layer), pool node representations, and map the pooled vector to a scalar energy via an MLP \citep{rgcn}.
This change is independent of PN: any potential family can be used at each noise level, but sharper (more expressive) targets typically make mixing harder, increasing the benefit of replica exchange.
Additional architectural details are given in Appendix~\ref{app:exp-details}.

\subsection{Parallel noising: replica exchange over corruption levels}\label{sec:pn-sampler}
Parallel noising follows the standard replica-exchange template \citep{swendsen1986replica,geyer1991markov,hukushima1996exchange} (Section~\ref{sec:parallel_tempering}), but replaces temperature scaling with a ladder indexed by corruption levels.

\paragraph{Noise ladder and level-wise targets.}
Fix noise levels $0\le \nu_N < \nu_{N-1} < \cdots < \nu_1 \le 1$, where $\nu_N$ is the smallest noise (the \emph{target} level).
For each level $i\in\{1,\dots,N\}$ we define a target distribution $\pi_i$ with an unnormalized log-density $\rho_i:\Omega\to\mathbb{R}$:
\[
\pi_i(\omega) \;=\; \frac{\exp(\rho_i(\omega))}{Z_i},\qquad
Z_i=\sum_{\omega'\in\Omega}\exp(\rho_i(\omega')).
\]
In the learning setup of Section~\ref{sec:pn-training}, each $\pi_i$ is an NMLN with its own parameters $\theta_i$ trained on data corrupted at rate $\nu_i$; we then write $\rho_i(\omega)=\rho_{\theta_i}(\omega)$.
The sampling algorithm itself only requires that we can evaluate each $\rho_i(\omega)$ up to an additive constant.

\paragraph{Within-level updates and swaps.}
Parallel noising targets the product distribution on $\Omega^N$,
\[
\Pi(\omega_1,\dots,\omega_N)\;=\;\prod_{i=1}^N \pi_i(\omega_i),
\]
and alternates:
(i) \emph{within-level} updates using any Markov kernels $K_i$ that leave $\pi_i$ invariant (e.g., blocked Gibbs / Metropolis-within-Gibbs on $\omega_i$),
and (ii) \emph{swap} proposals between adjacent levels.
A swap proposal exchanges $(\omega_i,\omega_{i+1})\mapsto(\omega_{i+1},\omega_i)$ and is accepted with the Metropolis probability
\begin{equation}
\label{eq:pn-swap}
\begin{aligned}
\alpha_{i,i+1}(\omega_i,\omega_{i+1})
&=\min\left\{1,\;\frac{\pi_i(\omega_{i+1})}{\pi_i(\omega_i)}\cdot\frac{\pi_{i+1}(\omega_i)}{\pi_{i+1}(\omega_{i+1})}\right\}\\
&=\min\left\{1,\;\exp(\Delta_{i,i+1})\right\},
\end{aligned}
\end{equation}
where
\[
\Delta_{i,i+1}
=
\rho_i(\omega_{i+1})-\rho_i(\omega_i)+\rho_{i+1}(\omega_i)-\rho_{i+1}(\omega_{i+1}).
\]
This is the standard replica-exchange ratio (Remark~\ref{remark1}), but it does not require defining or tuning temperatures.
Algorithm~\ref{alg:pn} summarizes one sweep.

\begin{algorithm}[t]
\caption{One sweep of Gibbs sampling with Parallel Noising (PN)}
\label{alg:pn}
\begin{algorithmic}[1]
\STATE \textbf{Input:} states $(\omega_1,\dots,\omega_N)$, potentials $(\rho_1,\dots,\rho_N)$, within-level kernels $(K_1,\dots,K_N)$
\FOR{$i=1$ to $N$}
    \STATE $\omega_i \sim K_i(\omega_i,\cdot)$ \COMMENT{within-level update (e.g., blocked Gibbs)}
\ENDFOR
\FOR{$\text{phase}\in\{\text{even},\text{odd}\}$}
    \STATE $start \gets 1$ if phase is even else $2$
    \FOR{$i=start$ to $N-1$ step $2$}
        \STATE propose swap $(\omega_i,\omega_{i+1})\leftarrow(\omega_{i+1},\omega_i)$
        \STATE accept with probability $\alpha_{i,i+1}(\omega_i,\omega_{i+1})$ in \eqref{eq:pn-swap}
    \ENDFOR
\ENDFOR
\STATE \textbf{Output:} updated states $(\omega_1,\dots,\omega_N)$
\end{algorithmic}
\end{algorithm}

\subsection{Training objective and estimator}\label{sec:pn-training}
A central motivation for PN is to stabilize \emph{likelihood training} by improving the quality of model samples used in the negative phase.
The noise levels define auxiliary targets that help mixing; we fit one NMLN per level (no parameter sharing across levels) and ultimately care about the lowest-noise model.

\paragraph{Per-level likelihood objectives.}
Let $p_{\mathrm{data}}$ denote the empirical distribution over training worlds in $\Omega$.
For each level $i$, define the corrupted-data distribution $q_i := p_{\mathrm{data}} K_{\nu_i}$ (Section~\ref{sec:noising}).
We fit an NMLN at each level $i$ with parameters $\theta_i$ and potential $\rho_{\theta_i}$, inducing
\[
\pi_{\theta_i}(\omega) \;\propto\; \exp\big(\rho_{\theta_i}(\omega)\big).
\]
Training maximizes the sum of per-level log-likelihoods
\begin{equation}
\label{eq:multi-level-ml}
\max_{\theta_1,\dots,\theta_N}\;\sum_{i=1}^N\; \mathbb{E}_{\omega\sim q_i}\big[\rho_{\theta_i}(\omega)\big] \; - \; \log Z_{\theta_i},
\end{equation}
where $Z_{\theta_i}=\sum_{\omega'\in\Omega}\exp(\rho_{\theta_i}(\omega'))$.
The gradient for level $i$ has the usual positive/negative-phase form
\[
\nabla_{\theta_i} \ell_i
= \mathbb{E}_{\omega\sim q_i}\big[\nabla_{\theta_i} \rho_{\theta_i}(\omega)\big]
- \mathbb{E}_{\omega\sim \pi_{\theta_i}}\big[\nabla_{\theta_i} \rho_{\theta_i}(\omega)\big],
\]
where the negative-phase expectation is approximated by MCMC.

\paragraph{Persistent PN chains for the negative phase.}
We maintain $R$ persistent PN replicas (Section~\ref{sec:pn-sampler}), each containing states $(\omega^{(r)}_1,\dots,\omega^{(r)}_N)$.
At each SGD step we advance each replica by a small, fixed number $T$ of PN sweeps (Algorithm~\ref{alg:pn}), and we approximate the model expectation at level $i$ using the \emph{current} snapshot $\{\omega^{(r)}_i\}_{r=1}^R$.
We do not store or average over full chain histories; this yields a persistent stochastic gradient estimator analogous in spirit to persistent contrastive divergence \citep{tieleman2008training}, which maintains Markov chains across parameter updates, rather than ordinary contrastive divergence \citep{hinton2002training}, which reinitializes chains from data.
Algorithm~\ref{alg:train-pn} shows one SGD step.

\begin{algorithm}[t]
\caption{One SGD step for training per-level NMLNs with PN}\label{alg:train-pn}
\begin{algorithmic}[1]
\STATE \textbf{Input:} minibatch $\mathcal{B}$, noise levels $\nu_{1:N}$, parameters $\{\theta_i\}_{i=1}^N$, persistent states $\{\omega^{(r)}_i\}_{i,r}$, PN steps $T$
\FOR{$r=1$ to $R$}
    \FOR{$t=1$ to $T$}
        \STATE $(\omega^{(r)}_1,\dots,\omega^{(r)}_N)\gets \textsc{PN-Sweep}((\omega^{(r)}_1,\dots,\omega^{(r)}_N),\{\rho_{\theta_i}\}_{i=1}^N)$ \COMMENT{Alg.~\ref{alg:pn}}
    \ENDFOR
\ENDFOR
\FOR{$i=1$ to $N$}
    \STATE Sample corrupted minibatch $\widetilde{\mathcal{B}}_i$ by corrupting each $\omega\in\mathcal{B}$ with $K_{\nu_i}$
    \STATE $\widehat{\mathcal{L}}_i(\theta_i)\gets \frac{1}{|\widetilde{\mathcal{B}}_i|}\sum_{\tilde\omega\in\widetilde{\mathcal{B}}_i} \rho_{\theta_i}(\tilde\omega)\; -\; \frac{1}{R}\sum_{r=1}^R \rho_{\theta_i}(\omega^{(r)}_i)$
    \STATE Update $\theta_i \leftarrow \theta_i + \eta\, \nabla_{\theta_i} \widehat{\mathcal{L}}_i(\theta_i)$
\ENDFOR
\end{algorithmic}
\end{algorithm}

\subsection{Why corruption levels are attractive in changing models}\label{sec:pn-changing}
During likelihood training, the target distribution changes as parameters are updated.
In temperature-based parallel tempering, swap acceptance can deteriorate abruptly when the effective energy scale drifts, so temperature ladders may require frequent retuning.
In PN, the ladder is defined by the fixed corruption operator $K_{\nu}$ and fixed noise levels $\{\nu_i\}$; Section~\ref{sec:pn-theory} provides overlap and tracking guarantees that depend on the corruption increments rather than on a temperature scale.

\subsection{Choosing the noising schedule in practice}\label{sec:pn-schedule}
PN requires choosing the noise levels $\nu_1>\cdots>\nu_N$.
As with all replica-exchange methods, performance hinges on \emph{overlap} between adjacent targets: if $\pi_i$ and $\pi_{i+1}$ concentrate on disjoint regions, swaps are rarely accepted and the ladder decouples.
Section~\ref{sec:pn-theory} makes this connection precise by relating expected swap acceptance to an overlap quantity (Proposition~\ref{prop:swap-overlap}) and by providing a distribution-free sanity bound for Bernoulli noising (Proposition~\ref{prop:tv-noise}).
In practice, these results suggest using sufficiently fine spacing in $\nu$ so that adjacent levels retain non-trivial overlap, and validating this by monitoring swap acceptance rates.

\section{Theoretical Properties of Parallel Noising}\label{sec:pn-theory}
This section summarizes the key guarantees that guide practical PN design; proofs are in Appendix~\ref{app:theory} and Appendix~\ref{app:proofs}, except for Proposition~\ref{prop:swap-overlap}, which is a standard replica-exchange identity that we cite.

\paragraph{Correctness (exactness).}
PN is standard replica exchange on the product target $\Pi(\omega_{1:N})=\prod_{i=1}^N \pi_i(\omega_i)$: within-level kernels preserve each $\pi_i$, and Metropolis swaps preserve $\Pi$.
Consequently, in stationarity the marginal of the lowest-noise chain is \emph{exactly} the desired target $\pi_N$ (Proposition~\ref{prop:pn-correctness}).

\begin{proposition}[Correctness of PN]\label{prop:pn-correctness}
Assume that for each level $i$, the within-level kernel $K_i$ leaves $\pi_i$ invariant.
Let $K=\bigotimes_{i=1}^N K_i$ be the product update on $\Omega^N$ and let $S_{i,i+1}$ be the Metropolis swap kernel between adjacent levels $i$ and $i{+}1$ with acceptance probability as in Eq. \eqref{eq:pn-swap}.
Then:
\begin{enumerate}
    \item $K$ leaves the product distribution $\Pi(\omega_1,\dots,\omega_N)=\prod_{i=1}^N \pi_i(\omega_i)$ invariant;
    \item for each $i$, $S_{i,i+1}$ leaves $\Pi$ invariant;
    \item any composition of $K$ and swap kernels (e.g., Algorithm~\ref{alg:pn}) leaves $\Pi$ invariant; and
    \item in stationarity, the marginal distribution of the lowest-noise chain $\omega_N$ is exactly $\pi_N$.
\end{enumerate}
\end{proposition}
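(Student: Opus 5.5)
The plan is to verify each of the four claims directly from the definition of invariance on the finite state space $\Omega^N$. We write $\Pi P = \Pi$ for $\sum_{x}\Pi(x)P(x\to y)=\Pi(y)$.

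For item 1, we expand the product kernel. Since $K(\omega_{1:N}\to\omega'_{1:N})=\prod_i K_i(\omega_i\to\omega'_i)$, the sum over $\omega_{1:N}$ factorizes:
\[
\sum_{\omega_{1:N}}\prod_i \pi_i(\omega_i)K_i(\omega_i\to\omega'_i)=\prod_i\sum_{\omega_i}\pi_i(\omega_i)K_i(\omega_i\to\omega'_i)=\prod_i\pi_i(\omega'_i).
\]
The last equality uses the assumed $\pi_i$-invariance of each $K_i$. Algorithm~\ref{alg:pn} applies the $K_i$ sequentially rather than jointly. This makes no difference: each sequential step is the kernel $K_i$ tensored with identities, and each such kernel is $\Pi$-invariant by the same one-factor computation. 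Item 1 therefore also follows from item 3 applied to these factors.

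For item 2, we prove the stronger property of detailed balance. Fix $x=(\dots,\omega_i,\omega_{i+1},\dots)$ and let $y$ be $x$ with coordinates $i$ and $i+1$ exchanged. The kernel $S_{i,i+1}$ moves $x$ to $y$ with probability $\alpha_{i,i+1}(\omega_i,\omega_{i+1})$ and otherwise stays at $x$. Write $r=\Pi(y)/\Pi(x)$. All factors of $\Pi$ other than $i,i+1$ cancel, so
\[
r=\frac{\pi_i(\omega_{i+1})\,\pi_{i+1}(\omega_i)}{\pi_i(\omega_i)\,\pi_{i+1}(\omega_{i+1})},
\]
which is exactly the ratio inside Eq.~\eqref{eq:pn-swap}. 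The normalizers $Z_i,Z_{i+1}$ cancel here as well, so $\exp(\Delta_{i,i+1})$ is a legitimate expression for $r$. The reverse move from $y$ has ratio $1/r$. The standard Metropolis identity then gives
\[
\Pi(x)\min\{1,r\}=\min\{\Pi(x),\Pi(y)\}=\Pi(y)\min\{1,1/r\}.
\]
Detailed balance follows for every off-diagonal pair, and it is trivial on the diagonal. Summing over $x$ yields invariance.

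The main technical care needed is at states with $\Pi(x)=0$. These cannot occur here, because each $\pi_i\propto\exp(\rho_i)$ is strictly positive on the finite set $\Omega$. We will note this explicitly. We also note the degenerate case $\omega_i=\omega_{i+1}$, where $x=y$ and the move is trivial.

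For item 3, we use that invariance is closed under composition: if $\Pi P=\Pi$ and $\Pi Q=\Pi$, then $\Pi(PQ)=(\Pi P)Q=\Pi$. Each phase in Algorithm~\ref{alg:pn} consists of swaps on disjoint adjacent pairs. Such a phase is a composition of commuting $S_{i,i+1}$'s, so it is covered. Induction over the finite sequence of kernels in a sweep, and over $T$ sweeps, finishes this item.

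For item 4, the last step is a marginalization. If $\omega_{1:N}\sim\Pi$, then summing $\Pi$ over $\omega_1,\dots,\omega_{N-1}$ gives $\pi_N(\omega_N)\prod_{i<N}\sum_{\omega_i}\pi_i(\omega_i)=\pi_N(\omega_N)$. We will remark that this statement concerns stationarity only. Convergence to $\Pi$ would additionally need irreducibility and aperiodicity, which are not claimed.
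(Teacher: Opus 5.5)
Your proposal is correct and follows essentially the same route as the paper's proof: you factorize the product kernel for item 1, use the pointwise Metropolis identity $\Pi(x)\alpha(x)=\min\{\Pi(x),\Pi(T(x))\}$ to get detailed balance for item 2, use closure of invariance under composition for item 3, and marginalize $\Pi$ for item 4. Your extra remarks are sound refinements that do not change the argument: the sequential application of the $K_i$ in Algorithm~\ref{alg:pn}, the strict positivity of each $\pi_i$, and the point that only invariance (not convergence) is claimed.
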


\noindent\emph{Proof.} See Appendix~\ref{app:proofs}.

\paragraph{Relation to temperature-based PT.}
Parallel tempering corresponds to replica exchange over a ladder of \emph{temperature-scaled} targets, typically of the form $\pi_i(\omega)\propto \exp(\beta_i\rho(\omega))$ for a base energy $\rho$ \citep{swendsen1986replica,geyer1991markov,hukushima1996exchange}.
Under this temperature-scaled choice, our swap acceptance in Eq. \eqref{eq:pn-swap} reduces to the standard PT ratio.
In contrast, PN constructs intermediate targets via Bernoulli corruption in data space; this ladder is generally not equivalent to temperature scaling, so we do \emph{not} claim that PT can be obtained from Bernoulli-corrupted ladders.
Rather, PT and PN are two different instantiations of the same replica-exchange template with different intermediate distributions.

\paragraph{Acceptance as distributional overlap.}
Expected swap acceptance between adjacent levels equals an overlap quantity: it is high exactly when adjacent targets have substantial overlap (Proposition~\ref{prop:swap-overlap}).
This motivates selecting $\{\nu_i\}$ so that adjacent acceptance rates remain non-trivial during training (we monitor this directly).

\begin{proposition}[Expected acceptance equals overlap]\label{prop:swap-overlap}
Let $\mu=\pi_i\otimes\pi_{i+1}$ and $\mu^{\mathrm{swap}}=\pi_{i+1}\otimes\pi_i$. If $(X,Y)\sim\mu$, then
\[
\mathbb{E}\big[\alpha_{i,i+1}(X,Y)\big]
=\int \min\{d\mu,d\mu^{\mathrm{swap}}\}
=1-\|\mu-\mu^{\mathrm{swap}}\|_{\mathrm{TV}},
\]
i.e., the mean swap acceptance equals an overlap coefficient between adjacent targets \citep{kofke2002acceptance}.
\end{proposition}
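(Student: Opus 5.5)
The proof is a direct computation on the finite space $\Omega\times\Omega$. Write $\mu(x,y)=\pi_i(x)\pi_{i+1}(y)$ and $\mu^{\mathrm{swap}}(x,y)=\pi_{i+1}(x)\pi_i(y)$. The first step is to note that the ratio inside the minimum in Eq.~\eqref{eq:pn-swap} is a likelihood ratio:
\[
\frac{\pi_i(y)\,\pi_{i+1}(x)}{\pi_i(x)\,\pi_{i+1}(y)} \;=\; \frac{\mu^{\mathrm{swap}}(x,y)}{\mu(x,y)},
\]
whenever $\mu(x,y)>0$. The normalizers $Z_i,Z_{i+1}$ cancel, so this equals $\exp(\Delta_{i,i+1})$. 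Since NMLN targets are of the form $\exp(\rho)/Z$, they are strictly positive on $\Omega$, so the ratio is always well defined. I would still remark that pairs with $\mu(x,y)=0$ contribute nothing to the expectation, which keeps the argument valid for general ladders with zeros.

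Next comes the expectation itself:
\[
\mathbb{E}_\mu[\alpha_{i,i+1}(X,Y)]=\sum_{x,y}\mu(x,y)\min\Big\{1,\frac{\mu^{\mathrm{swap}}(x,y)}{\mu(x,y)}\Big\}=\sum_{x,y}\min\{\mu(x,y),\mu^{\mathrm{swap}}(x,y)\}.
\]
This is the discrete form of $\int\min\{d\mu,d\mu^{\mathrm{swap}}\}$, and it gives the first equality. If one prefers a measure-theoretic statement, the same computation goes through with densities with respect to the counting measure, or with respect to any dominating measure such as $\mu+\mu^{\mathrm{swap}}$.

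For the second equality I would use the identity $\min\{a,b\}=\tfrac12(a+b-|a-b|)$. Summing it over $(x,y)$ and using that both $\mu$ and $\mu^{\mathrm{swap}}$ are probability distributions gives
\[
\sum\min\{\mu,\mu^{\mathrm{swap}}\}=1-\tfrac12\sum|\mu-\mu^{\mathrm{swap}}|=1-\|\mu-\mu^{\mathrm{swap}}\|_{\mathrm{TV}},
\]
with the convention that total variation is half the $\ell_1$ distance. The statement relies on that convention, so I would state it explicitly.

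No step is a genuine obstacle. The only care points are the convention for total variation and the treatment of states where $\mu$ vanishes, which is where the likelihood-ratio rewriting could otherwise fail. Positivity of the NMLN targets removes the latter concern.
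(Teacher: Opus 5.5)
Your proof is correct, but the paper does not argue this way. The paper gives no derivation here. It states that this is a standard identity and cites \citet{kofke2002acceptance} and its erratum, which treat the temperature-ladder case and express the mean exchange acceptance through the overlap of adjacent energy distributions. Your proof instead computes directly on $\Omega\times\Omega$ and holds for any pair of targets. That generality is what PN needs, because the per-level potentials $\rho_{\theta_i}$ are trained independently and are not temperature rescalings of one energy. The cited temperature-based result therefore only applies after reinterpreting the energy as the log-ratio $\rho_{i+1}-\rho_i$. Your key step, $\mu(x,y)\,\alpha_{i,i+1}(x,y)=\min\{\mu(x,y),\mu^{\mathrm{swap}}(x,y)\}$, is the same pointwise identity $\Pi(x)\,\alpha(x)=\min\{\Pi(x),\Pi(T(x))\}$ the paper uses for detailed balance in part (2) of the proof of Proposition~\ref{prop:pn-correctness}. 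Summing that identity over the other coordinates gives your first equality, so the statement follows in two lines from material already in the appendix. Your convention $\|\cdot\|_{\mathrm{TV}}=\tfrac12\|\cdot\|_1$ matches the one the paper uses in the proof of Corollary~\ref{cor:tv-marginal}. Your remarks on positivity of NMLN targets and on points where $\mu$ vanishes are correct and sufficient. The paper's citation ties the result to the replica-exchange literature and its energy-overlap interpretation, while your route gives a complete proof with minimal assumptions for arbitrary ladders.
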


\paragraph{A simple worst-case sanity bound for Bernoulli noising.}
For Bernoulli corruption on $d$ ground atoms, the total variation distance between adjacent noise levels admits a distribution-free bound proportional to $d|\nu-\nu'|$ (Appendix Proposition~\ref{prop:tv-noise}), yielding a corresponding acceptance lower bound (Appendix Corollary~\ref{cor:acc-lb}).
While typically loose, it clarifies that schedules with large jumps in $\nu$ can destroy overlap on high-dimensional worlds.

\paragraph{Time-varying targets during learning.}
Because temperature ladders implicitly depend on an evolving energy scale, PT swap rates can degrade as parameters drift.
With PN the ladder is defined by a fixed corruption operator. We show in the appendix, on a toy example, that PN may continue to perform well even in situations where PT already degrades.

\section{Experiments}\label{sec:experiments}
We evaluate NMLN* (GNN potentials + PN) on four molecular benchmarks (ChEMBL, QM9, ZINC250k, MOSES).
We report results only on \textbf{ChEMBL} and defer the full suite, ablations, and implementation details to Appendix~\ref{app:exp-details}.

\paragraph{Protocol and metric.}
Following \citet{jung2024quantified}, we generate 2M samples per method and report \emph{recall curves}: among the $t$ most frequently generated unique molecules, how many occur in the test set (frequency as a proxy for model probability).

\begin{figure*}[ht]
    \centering

    \begin{subfigure}[b]{0.9\textwidth}  
        \centering
        \includegraphics[width=0.4\textwidth]{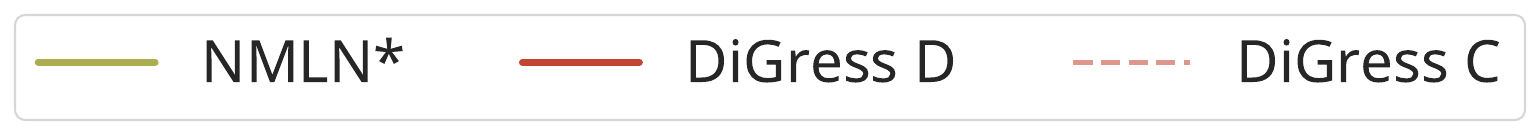}
    \end{subfigure}
    \vspace{0.2cm}
    \begin{subfigure}[b]{0.33\textwidth}
        \centering
        \includegraphics[width=\columnwidth]{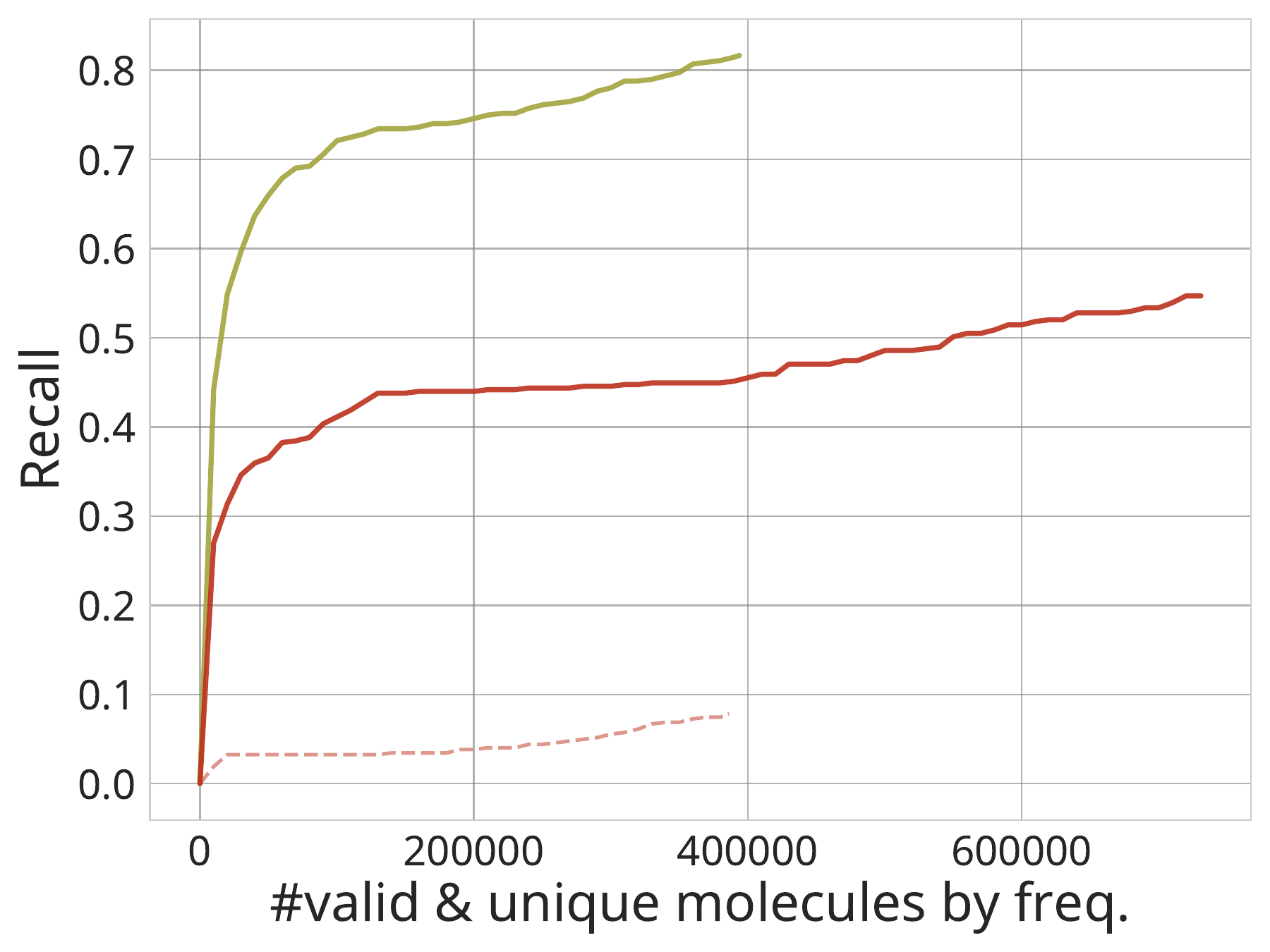}
        \caption{ChEMBL 10}
    \end{subfigure}
    \begin{subfigure}[b]{0.33\textwidth}
        \centering
        \includegraphics[width=\columnwidth]{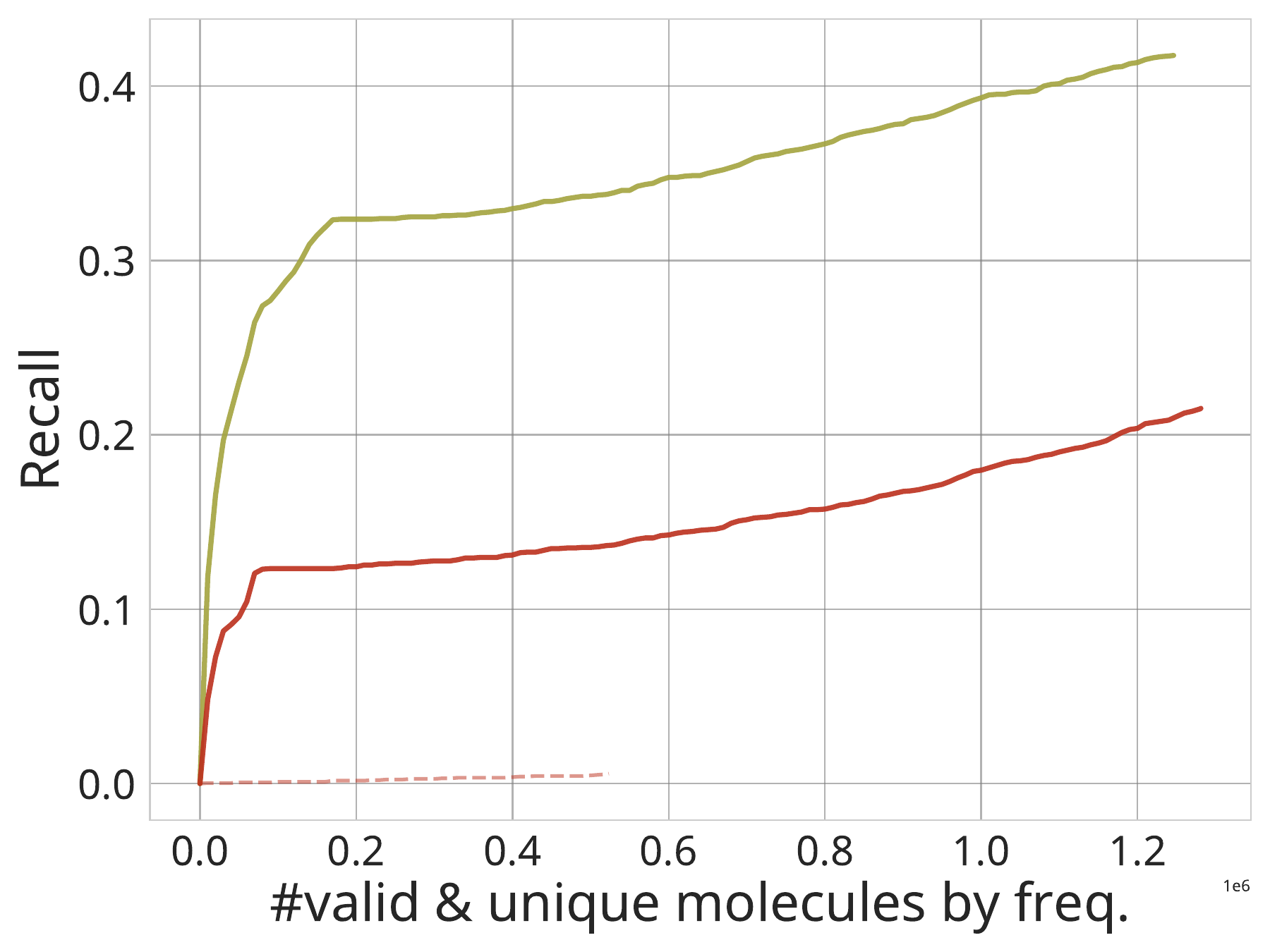}
        \caption{ChEMBL 15}
    \end{subfigure}
    \begin{subfigure}[b]{0.32\textwidth}
        \centering
        \includegraphics[width=\columnwidth]{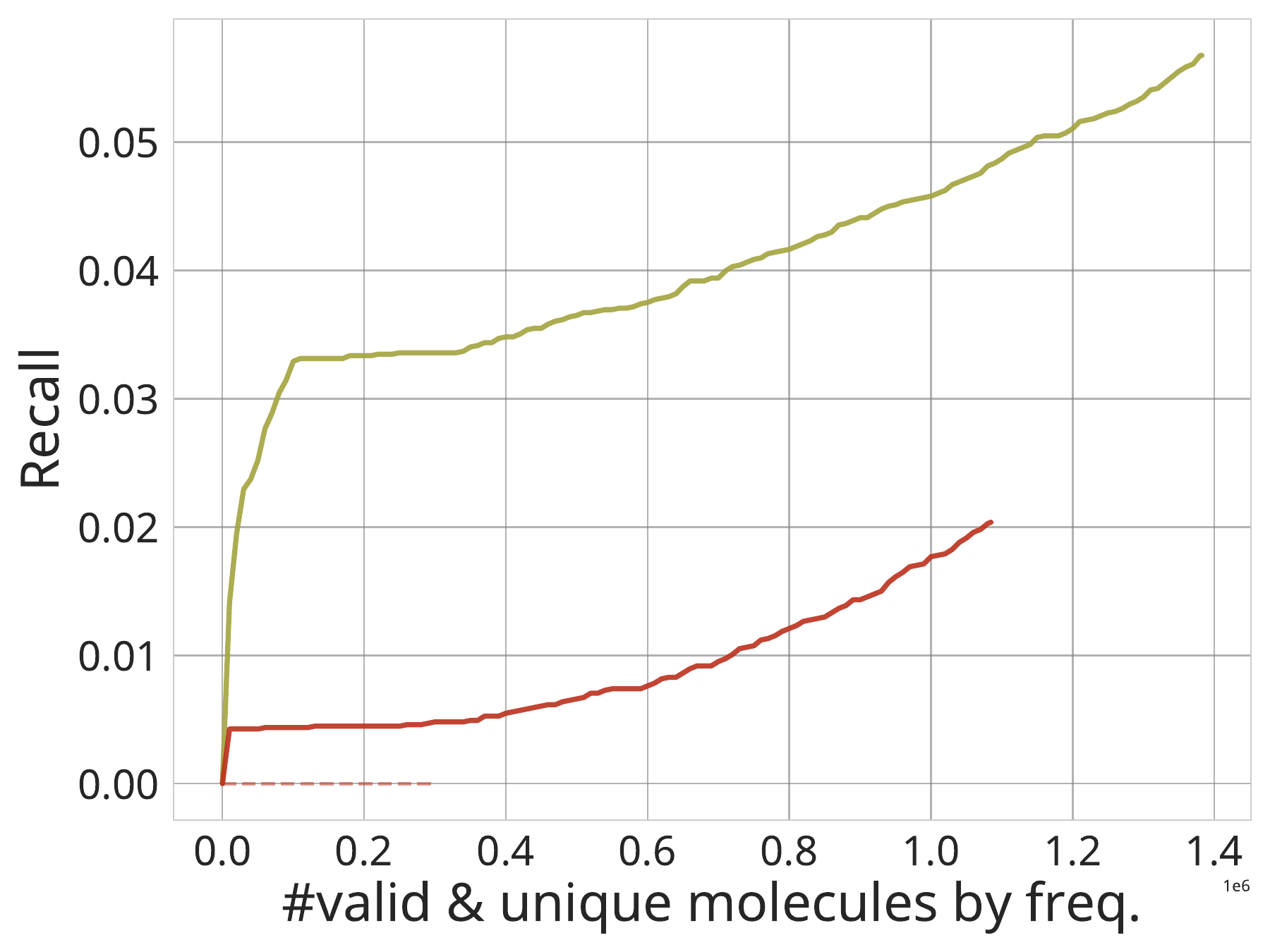}
        \caption{ChEMBL 20}
    \end{subfigure}

    \caption{Recall curves on ChEMBL: NMLN* vs.\ DiGress (discrete and continuous).}
    \label{fig:main-text-metrics_q3-chembl}
\end{figure*}

\paragraph{Baselines.}
We compare against the strongest available NMLN baseline (DeepSet-NMLN \citep{jung2024quantified}) and a general diffusion model for graphs (DiGress; discrete and continuous \citep{digress}). Additional molecule-specific generators (MoleculeRNN variants; PaccMann) are reported in Appendix~\ref{app:exp-details}.

\begin{figure}[t]
    \centering
    \begin{subfigure}{\columnwidth}
        \centering
        \includegraphics[width=0.4\linewidth]{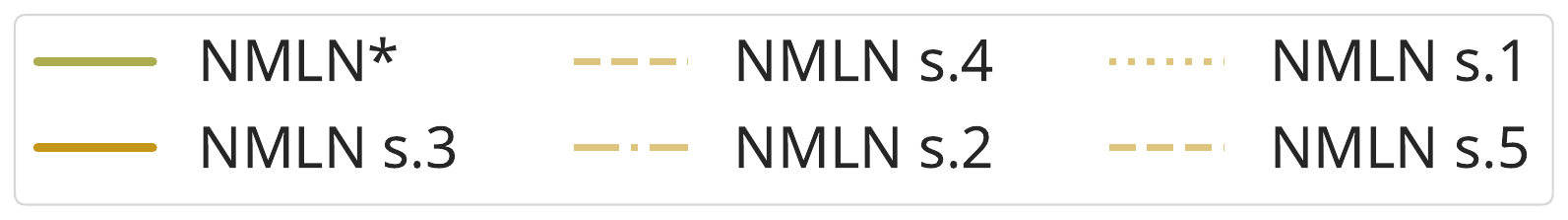}
    \end{subfigure}
    \begin{subfigure}{0.38\textwidth}
        \centering
        \includegraphics[width=\linewidth]{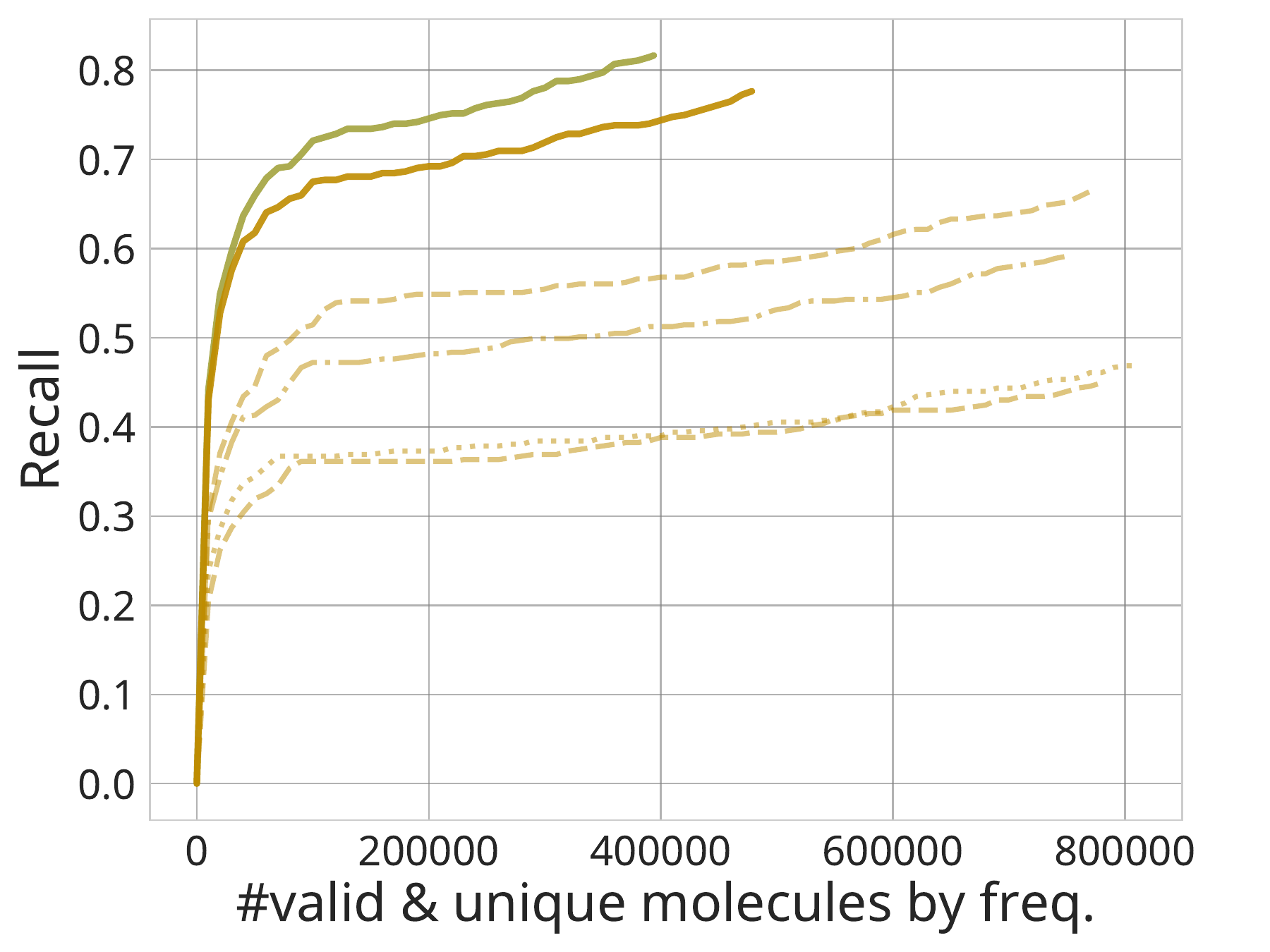}
    \end{subfigure}

    \caption{ChEMBL-10 recall curves. NMLN* (GNN + PN) improves recall over prior NMLN variants; full results and ablations are in the appendix.}
    \label{fig:main-chembl10}
\end{figure}

\subsubsection{Q1: Can NMLN* outperform state-of-the-art diffusion models?}
We compare NMLN* to DiGress \citep{digress} (continuous and discrete). Figure~\ref{fig:main-text-metrics_q3-chembl} shows that NMLN* substantially improves recall across ChEMBL sizes.

\subsubsection{Q2: How much do the improvements matter?}
To disentangle \textbf{stronger potentials} implemented as GNNs from \textbf{better inference} due to parallel noising, we run compute-aware ablations.

\paragraph{Baseline (old) NMLNs.}
We consider \textbf{NMLN s.1}, the DeepSet-NMLN of \citet{jung2024quantified} (8 parallel chains), and \textbf{NMLN s.2}, which increases this to 40 chains (still practical to train). NMLN* uses $N{=}5$ noise levels with $R{=}100$ replicas (500 chains total) and reports samples from the coldest level. Scaling DeepSet-NMLN further (e.g., 100 chains) was prohibitively slow due to dense local potentials.

\paragraph{Effect of the potential family (sampler fixed).}
Keeping plain multi-chain Gibbs sampling, \textbf{NMLN s.3 (GNN + Gibbs)} replaces DeepSet potentials with GNN potentials at the default chain count.

\paragraph{Effect of the sampler (potential fixed).}
Fixing the GNN potential, we vary sampling strategy and compute:
\begin{itemize}
    \item \textbf{NMLN s.4 (GNN + Gibbs, 100 chains)} matches the number of coldest-level replicas in NMLN*.
    \item \textbf{NMLN s.5 (GNN + Gibbs, 500 chains)} matches the total chain count ($N\times R$) of NMLN*.
    \item \textbf{NMLN* (GNN + PN)} runs the full parallel-noising ladder ($N{=}5$, $R{=}100$) with swap moves.
\end{itemize}

\paragraph{Results.}
Figure~\ref{fig:main-chembl10} shows that \textit{both improvements matter}, and that their \textit{combination is clearly the best-performing configuration}.

First, replacing DeepSet potentials with GNN potentials already yields a clear improvement under the same plain Gibbs sampler (compare \textbf{s.3} against the older NMLN baselines \textbf{s.1} and \textbf{s.2}).

Second, improving the sampler with parallel noising (PN) provides additional gains beyond what can be explained by simply running more Gibbs chains. In particular, \textbf{s.4} and \textbf{s.5} increase the number of Gibbs chains to match the coldest-level replicas and the total chain count of NMLN*, respectively, yet \textbf{NMLN* (GNN + PN) still performs better}. Actually, simply increasing the number of parallel chains without a replica exchange scheme seems to hurt performance. Therefore, the improvement is \textbf{not} a pure compute/chain-count effect: \textbf{PN yields better inference than multi-chain Gibbs at matched compute}.

\section{Conclusions}\label{sec:conclusions}

We introduced NMLN*, a strengthened Neural Markov Logic Network that tackles two practical bottlenecks: limited expressivity of fragment-based potentials and brittle, slow-mixing Gibbs inference under near-deterministic learned constraints. On the modeling side, we instantiate NMLN potentials with global GNN energies to capture long-range structure. On the inference side, we propose \emph{parallel noising}, an exact replica-exchange sampler that couples chains across a ladder of fixed corruption levels rather than temperatures, which stays well-behaved even as the target distribution changes during training. Empirically, NMLN* improves recall and sample quality on molecular generation benchmarks.

\FloatBarrier

\bibliographystyle{plainnat}
\bibliography{references}

\appendix
\section*{Appendix}

\section{Parallel Noising versus Parallel Tempering}

\subsection{Parallel Tempering MCMC}\label{sec:parallel_tempering_app}

Parallel tempering MCMC \citep{swendsen1986replica,geyer1991markov,hukushima1996exchange,desjardins2010tempered}, which is also known as {\em replica exchange MCMC}, is a classical method for sampling from complex distributions where plain Gibbs sampling would be inefficient. It improves mixing by simulating several Markov chains in parallel, which may exchange samples (explained below). Each chain is associated with an inverse temperature parameter $\beta_i \in (0,1]$, where the chain with $\beta_k = 1$ targets the true distribution we want to sample from---lower $\beta_i$'s correspond to flatter versions of this distribution.

Formally, each chain samples from the tempered distribution
\[
P_i(x) = \frac{1}{Z_i} \exp(-\beta_i E(x)),
\]
where $E(x)$ is the energy function of the target distribution and $Z_i$ is the normalization constant, known as {\em partition function}. The idea is that chains at lower inverse temperatures (i.e., higher temperatures) are better at exploring the space due to the flattened energy landscape, while the chain at $\beta_k = 1$ provides samples from the true target.

To allow information sharing between chains, the algorithm performs \emph{swap} attempts between adjacent chains $x^{(i)}$ and $x^{(i+1)}$. The proposed swap $(x^{(i)}, x^{(i+1)}) \mapsto (x^{(i+1)}, x^{(i)})$ is accepted with probability
\begin{multline*}
\alpha = \min\left\{1,\; \frac{P_i(x^{(i+1)}) \cdot P_{i+1}(x^{(i)})}{P_i(x^{(i)}) \cdot P_{i+1}(x^{(i+1)})} \right\} \\
= \min\left\{1,\; \exp\Big(-\beta_i E(x^{(i+1)}) + \beta_i E(x^{(i)}) \right. \\
 \left. - \beta_{i+1} E(x^{(i)}) + \beta_{i+1} E(x^{(i+1)}) \Big) \right\}
\end{multline*}
which ensures that each chain samples from the correct corresponding distribution. It turns out that it is also beneficial to perform the swapping in ``odd'' and ``even'' phases, where in the ``odd'' phases, the swaps are attempted for chains with odd number $i$ and their neighbor $i+1$, and analogically for even phases \citep{syed2022non}.

The motivation behind this technique is that chains at higher temperatures are more likely to traverse low-probability regions and can thus escape local modes. The swap mechanism then allows this exploratory information to be propagated back to the lower-temperature chains.

\begin{example}
    Figure \ref{fig:two_peaked_distribution} shows, as an illustration, results of sampling from a given bimodal density with and without parallel tempering---as can be seen, parallel tempering helps the sampler to cross the region of low density separating the two modes.

\begin{figure}[t]
    \centering
    \includegraphics[width=0.76\textwidth]{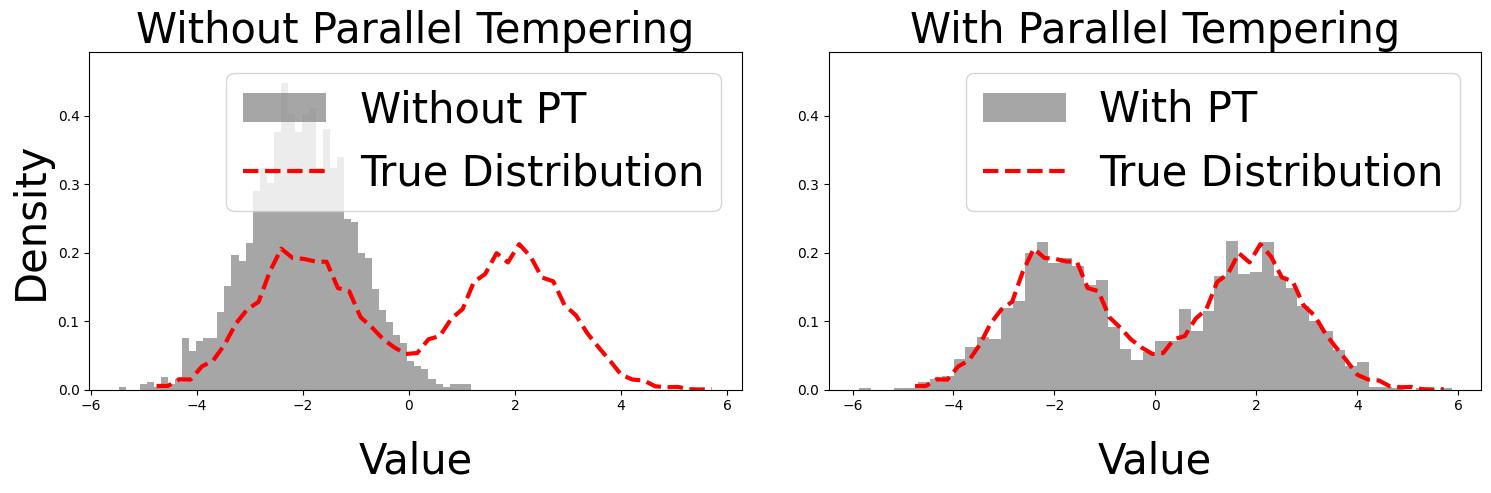}
    \caption{Sampling from a bi-modal distribution: No PT vs. PT}
    \label{fig:two_peaked_distribution}
\end{figure}
\end{example}

\begin{remark}\label{remark1}
It will be convenient for the exposition further in the paper to make the substitution $\rho_i(x) = -\beta_i E(x)$. Here, we just absorbed the inverse temperature parameter and the energy function into the potential function $\rho_i$. The swap probability in parallel tempering then becomes $\alpha = \min\{1,\; \exp\Big(\rho_i(x^{(i+1)}) - \rho_i(x^{(i)}) + \rho_{i+1}(x^{(i)}) - \rho_{i+1}(x^{(i+1)}) \Big)\}$.
\end{remark}

\subsection{Intuition: Parallel Noising vs.\ Parallel Tempering}
\label{app:pn-vs-pt-intuition}

Parallel noising (PN) and parallel tempering (PT) are both replica-exchange methods: they run a ladder of auxiliary chains and use swap moves to transfer exploratory states back toward the target distribution. The crucial difference is how the auxiliary distributions are constructed.

PT builds its ladder by rescaling the energy with different temperatures. High-temperature chains flatten the energy landscape and can therefore move more easily between modes. This works well when the temperature ladder is well matched to the energy scale of the target: adjacent chains must have enough distributional overlap for swap moves to be accepted. However, this requirement can become problematic when the target is very sharp, highly constrained, or changing during training. In such cases, a fixed temperature ladder may either be too coarse, leading to low swap acceptance, or too mild, giving hot chains that are not exploratory enough.

PN instead builds its ladder by corrupting the state space. Rather than changing the energy scale, it defines auxiliary chains corresponding to increasingly noised versions of the structured object. The higher-noise chains explore relaxed versions of the same combinatorial problem, while the lowest-noise chain remains the target of interest. This can be more robust when the difficulty comes from sharp symbolic or combinatorial constraints, because the ladder is tied to the structure of the data rather than to a temperature scale that may be hard to tune.

The next two subsections illustrate this distinction in controlled settings. The first diagnostic uses a synthetic Boolean distribution whose energy scale changes over time, showing how PT can become sensitive to the temperature schedule. The second diagnostic uses a pixel-based $n$-queens generation task with CNN potentials, where the goal is to sample configurations with few or no queen attacks.

\subsection{Time-varying Boolean tracking example}
\label{app:toy-boolean-tracking}
This toy example isolates the issue of changing energy scales during learning.  It shows how a fixed temperature ladder can become poorly matched to a target distributions whose low energy barriers grow over time due to learning from deterministic distributions.

\paragraph{Target.}
Let $x\in\{0,1\}^d$ and let $k=\|x\|_1$ be its Hamming weight.  At time $t$ we define
\begin{align*}
\pi_t(x)&\propto \exp\{-E_t(x)\},\\
E_t(x)&\equiv E_t(k)=\beta_t\min\{k,d-k\}-h_t(2k-d).
\label{eq:toy_energy}
\end{align*}
The first term creates two wells around $0^d$ and $1^d$; increasing $\beta_t$ raises the barrier between them.  The second term tilts the distribution toward one well, and we sweep $h_t$ from negative to positive so that the preferred well changes during the run.  We track the magnetization
\[
m(x)=\frac{1}{d}\sum_{j=1}^d(2x_j-1)=\frac{2k-d}{d}\in[-1,1],
\]
whose exact expectation can be computed by summing over Hamming weights with multiplicities $\binom{d}{k}$.

\paragraph{PT and PN configurations.}
For PT, the replicas target $\pi_{t,T_\ell}(x)\propto\exp\{-E_t(x)/T_\ell\}$ using a geometrically spaced temperature ladder $T_\ell=T_{\max}^{\ell/(L-1)}$, with $T_{\max}=4$ and $L=10$.  For PN, the replicas target corrupted distributions $\pi_{t,\nu}=\pi_tK_\nu$, where $K_\nu$ is the independent bit-flip channel.  We use the quadratic noise schedule $\nu_\ell=\tfrac12(\ell/(L_\nu-1))^2$, which is denser near the clean target.  Both samplers use local bit-flip moves and odd--even adjacent swaps.

\paragraph{Result.}
Figure~\ref{fig:toy-tracking} shows the cold-replica estimate of $\mathbb{E}_{\pi_t}[m]$, averaged across inner-chain samples and random seeds.  As the barrier increases, PT with a fixed temperature range increasingly lags behind the changing target.  PN tracks the change more closely because high-noise replicas remain close to uniform even when the underlying energy becomes sharp.  This supports the use of corruption levels as a robust auxiliary ladder when training gradually learns near-deterministic constraints.

\begin{figure}[t]
  \centering
  \includegraphics[width=0.76\linewidth]{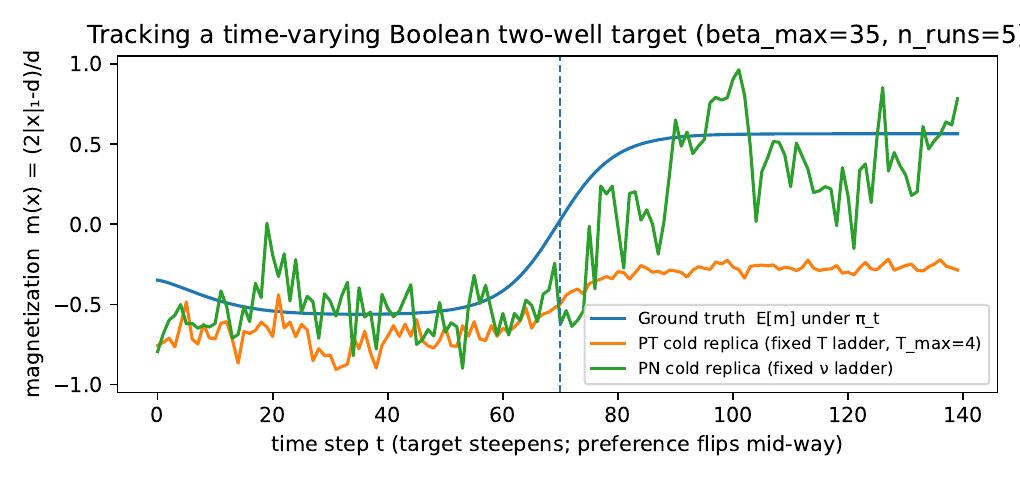}
  \caption{\textbf{Tracking a time-varying Boolean two-well target.}
  Ground truth $\mathbb{E}_{\pi_t}[m]$ is computed exactly by summing over Hamming weights.  PT uses a fixed temperature ladder, while PN uses a fixed noising ladder.  Curves for PT and PN average the cold-replica magnetization over inner-chain samples and multiple random seeds.}
  \label{fig:toy-tracking}
\end{figure}

\subsection{Pixel-based n-queens generation}
\label{app:nqueens}
We also evaluate PN and PT on a pixel-based n-queens generation task.  A board is represented as a binary image, with a queen indicated by an active pixel.  The energy is parameterized by a convolutional neural network, so the task tests whether the sampler can explore a learned, image-level potential whose high-probability states satisfy a global combinatorial constraint.

\paragraph{Metric.}
For each generated board, we count the number of attacking queen pairs, i.e., pairs of active pixels that share a row, column, or diagonal.  We report the average number of attacks over generated samples; lower is better, and zero attacks corresponds to a solved board when the board also contains the required number of queens.  This metric directly measures how much hard constraint violation remains in the generated samples.

\paragraph{Comparison.}
We use the same CNN potential and the same within-level local updates for both methods.  PT uses a temperature ladder beginning at the clean target ($T=1$) and increasing through\\ $\{1.0,1.08,1.1664,1.259712,1.360489\}$.  PN uses the corruption ladder $\nu\in\{0,0.02,0.05,0.1,0.2\}$.  In both cases, we report the clean/cold chain, and the curves correspond to independent runs.

\paragraph{Result.}
Figure~\ref{fig:nqueens-attacks} shows that both methods rapidly reduce the number of attacks from the random initialization, but their long-run behavior differs.  PT plateaus at a higher number of remaining conflicts, whereas PN continues to reduce the average number of attacks across the run.  The final samples in Figure~\ref{fig:nqueens-samples} give a qualitative view of the same effect: high-noise PN levels remain exploratory, while the clean PN chain produces less conflicted boards than the corresponding cold PT chain.  This diagnostic supports the main claim that noising ladders can be better suited than fixed temperature ladders for learned energies with crisp combinatorial structure.

\begin{figure}[t]
    \centering
    \includegraphics[width=0.76\linewidth]{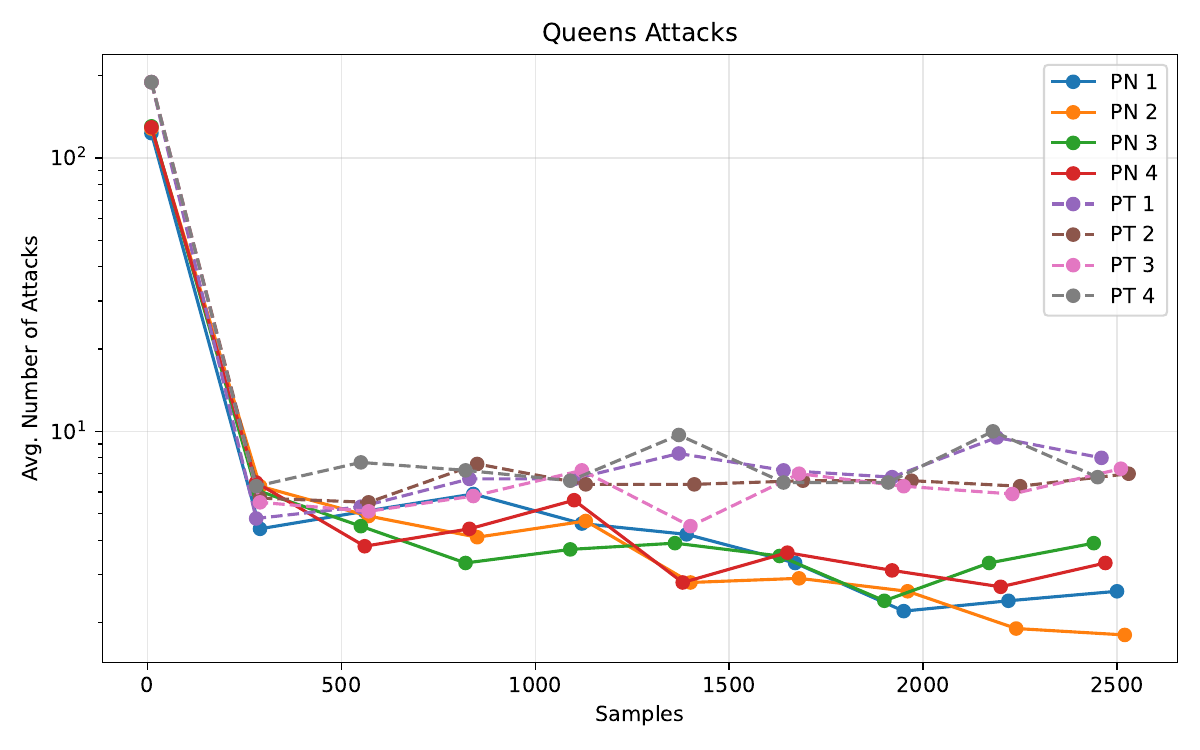}
    \caption{\textbf{Pixel-based n-queens diagnostic.} Average number of attacking queen pairs as a function of generated samples.  Lower is better.  PN reaches consistently lower attack counts than PT after the initial burn-in, indicating better exploration of low-conflict boards under the learned CNN potential.}
    \label{fig:nqueens-attacks}
\end{figure}

\begin{figure*}[t]
    \centering
    \begin{subfigure}[b]{0.48\textwidth}
        \centering
        \includegraphics[width=\linewidth]{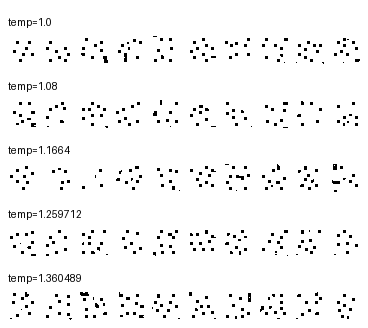}
        \caption{PT levels.}
    \end{subfigure}
    \hfill
    \begin{subfigure}[b]{0.48\textwidth}
        \centering
        \includegraphics[width=\linewidth]{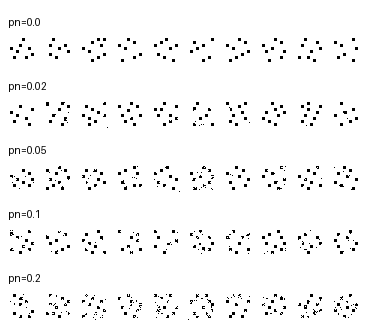}
        \caption{PN levels.}
    \end{subfigure}
    \caption{Representative final n-queens samples at different temperature/noise levels.  The cold/clean level is shown at the top of each panel.}
    \label{fig:nqueens-samples}
\end{figure*}

\section{Additional experimental results and ablations}\label{app:exp-additional}\label{app:exp-details}

\subsection{Compared Models}

Besides the latest version of NMLN, we compare NMLN* with the following two diffusion-based algorithms from \citep{digress}:

\begin{itemize}
    \item \textbf{DiGress Discrete} \citep{digress}, implemented in \citep{vignac2023digressgithub}, utilizes a discrete diffusion process that progressively edits graphs with noise, through the process of adding or removing edges and changing the labels.
    \item \textbf{DiGress Continuous} which is the continuous version using Gaussian noise.
\end{itemize}

We also compare with molecule-specific generators that use string encodings such as SMILES or SELFIES \citep{selfies}. SELFIES guarantees molecular validity for strings over its alphabet, whereas SMILES does not. These methods exploit chemistry-specific representations and therefore do not apply to arbitrary relational domains, but they provide informative domain-specialized baselines:

\begin{itemize}
    \item \textbf{Molecule-RNN Char} \citep{shi2025moleculernn} is a recurrent neural network designed to generate novel molecules from the distribution of a molecular training dataset. This method is based on the SMILES representation.
    \item \textbf{Molecule-RNN Regex} A more structured approach using regular expressions, where elements such as square-bracketed atoms (e.g., "[O-]") are treated as individual tokens.
    \item \textbf{Molecule-RNN Selfies} \citep{selfies} uses the Selfies representation of molecules---this model therefore generates only valid molecules.
    \item \textbf{PaccMann} \citep{born2021paccmannrl}, implemented by the PaccMannRL pipeline \citep{paccmann_rlgithub}, is a VAE-based generative model guided by reinforcement learning, where a learned reward function predicts molecule--target interactions, enabling the targeted generation of novel compounds for given protein or transcriptomic inputs.
\end{itemize}

\subsection{Datasets}

We report experimental results on four datasets restricted to molecules of sizes 9, 10, 15, and 20. These datasets are:

\begin{itemize}
    \item \textbf{ChEMBL} \citep{chembl}, which is a manually curated database of bioactive molecules with drug-like properties,

    \item \textbf{QM9} \citep{doi:10.1021/ci300415d,ramakrishnan2014quantum} contains stable small organic molecules made up of carbon, hydrogen, oxygen, nitrogen, and fluorine atoms,

    \item \textbf{ZINC250k} \citep{zinc250k}, which is a free database of commercially available compounds for virtual screening, and

    \item \textbf{MOSES} \citep{moses}, which is a curated and cleaned subset of the ZINC database, prepared specifically for benchmarking molecular generative models.
\end{itemize}

The number of molecules in the subsets of these datasets which we use are reported in Table \ref{tab:mol_sizes_app}.

\begin{table}[th]
\centering
\begin{tabular}{lcccc}
\hline
\textbf{Dataset} & \textbf{Size 9} & \textbf{Size 10} & \textbf{Size 15} & \textbf{Size 20} \\
\hline
ChEMBL     & 1 684   & 2 589 & 14 658 & 44 222 \\
QM9        & 109 813 & -     & -      & -      \\
MOSES      & -       & -     & 579    & 195 650    \\
ZINC250k   & 28      & 90    & 1 766  & 8 590    \\
\hline
\end{tabular}
\caption{Number of molecules in the datasets with a given number of heavy atoms.}
\label{tab:mol_sizes_app}
\end{table}

\subsection{Methodology}

We mainly follow the experimental methodology from \citep{jung2024quantified} where it was advocated to use a variant of ROC curves, called {\em coverage} or {\em recall curves} (we use the latter term in this paper), and motivated by \citet{sajjadi2018assessing}, obtained as follows: We let the model generate $N$ samples and we collect all the unique ones. For each unique sample, we store its frequency, i.e., how many times they were generated. Here, frequency is understood as a proxy for the probability of the molecule given by the learned model. As a generative performance indicator, we compute how many of the $t$ most frequently generated molecules are in the test set. We then plot this measure w.r.t.\ frequency thresholds $t$. This metric is related to a ROC curve, but in the generative setting.

To ensure a fair comparison, we let all methods generate a fixed number of 2M samples per model and dataset. While some methods may require longer generation times, this can be considered negligible in the context of real-world molecular synthesis where the top-ranking molecules suggested by the models may need to eventually be synthesized and tested.

\subsection{Detailed Experimental Settings}

Datasets are partitioned into training and test sets using an 80/20 split. All models were trained using their default hyperparameters as provided in the respective open-source repositories. The only modifications involved adjusting the generation pipeline to produce 2,000,000 samples, saving outputs in the SMILES format, and, in some cases, disabling third-party post-processing filters that would otherwise exclude invalid samples.

The GNN potentials in NMLN* consist of 10 R-GCN layers with a hidden dimensionality of 128 and ReLU activations. The update function is implemented as a standard linear layer with a matching hidden size and activation. The output of the GNN is formed by aggregating all intermediate layer outputs and applying a final MLP with three linear layers, each using ReLU activations and the same hidden size.

\subsection{Experimental Questions and Results}

In this section, we state several research questions and address them experimentally.

\subsubsection{Q1: Does NMLN* outperform the latest NMLN?}

\begin{figure}
    \centering

    \begin{subfigure}{\columnwidth}
        \centering
        \includegraphics[width=0.4\linewidth]{metrics_q1a2_v2/chembl10_legend_only_two_lines.pdf}
    \end{subfigure}

    \vspace{0.2cm}

    \begin{subfigure}{0.38\columnwidth}
        \centering
        \includegraphics[width=\linewidth]{metrics_q1a2_v2/chembl10_no_legend.pdf}
    \end{subfigure}

    \caption{Recall curves for different variants of NMLN on the dataset ChEMBL 10.}
    \label{fig:metrics_q1a2-chembl}
\end{figure}

We compared NMLN* with two baselines derived from the previous model: NMLN s.1, which uses the DeepSet-NMLN settings, and NMLN s.2, which increases the number of parallel chains from 8 in the DeepSet-NMLN to 40, which is still trainable in a reasonable time. In NMLN*, we run $R{=}100$ replicas over $N{=}5$ noise levels (500 chains total); we report samples from the coldest level (setting it to 100 for the DeepSet-NMLN already lead to unacceptably high runtime, which is a consequence of the dense local potential functions used in the original NMLNs).

As shown in Figure \ref{fig:metrics_q1a2-chembl}, the NMLN* models outperform both NMLN competitors already for molecules with 10 atoms by a significant margin, demonstrating the effectiveness of the proposed modifications.

\subsubsection{Q2: How much do the improvements matter?}

To separate the two contributions---\textbf{stronger potentials} and \textbf{better inference}---we consider a set of compute-aware ablations, all summarized in Figure~\ref{fig:metrics_q1a2-chembl}.

\paragraph{Effect of the potential family (sampler fixed).}
We compare the original DeepSet potentials (NMLN s.1/s.2 from Q1) against GNN potentials while keeping the sampler as plain multi-chain Gibbs:
\begin{itemize}
    \item \textbf{NMLN s.3 (GNN + Gibbs)} replaces the DeepSet potential with the GNN potential but keeps the default chain count.
\end{itemize}

\paragraph{Effect of the sampler (potential fixed).}
We then keep the GNN potential fixed and vary the sampling strategy and compute budget:
\begin{itemize}
    \item \textbf{NMLN s.4 (GNN + Gibbs, 100 chains)} increases the number of Gibbs chains to match the number of replicas at the coldest level of NMLN*.
    \item \textbf{NMLN s.5 (GNN + Gibbs, 500 chains)} matches the \emph{total} number of maintained chains ($N\times R$) in NMLN*.
    \item \textbf{NMLN* (GNN + PN)} uses the full parallel-noising ladder with $N{=}5$ levels and $R{=}100$ replicas (500 chains total), including swap moves.
\end{itemize}

\noindent\textbf{Compute note.} Replica exchange incurs overhead (swap evaluations) beyond within-level updates. To avoid overstating improvements, we report results both at matched chain count (s.4) and at matched total chain count (s.5), and we use the same generation budget (2M produced samples) across all methods.

To keep within-level updates tractable with a global GNN energy, we batch many candidate local edits on the GPU and reuse intermediate message-passing activations whenever possible; for the molecule sizes considered here (up to 20 heavy atoms), a full forward recomputation per proposed edit is also feasible and remains a small fraction of overall training time.

As shown in Figure \ref{fig:metrics_q1a2-chembl}, each individual modification contributes to performance improvements, but the full NMLN* configuration achieves the highest performance overall.

\subsubsection{Q3: Can NMLN* outperform state-of-the-art diffusion models?}

To answer this question, we compared NMLN* with DiGress \citep{digress}, both in the continuous and in the discrete versions. DiGress is another general graph-generation method and therefore a direct competitor to NMLNs. In Appendix Figure~\ref{fig:metrics_q3-chembl}, ~\ref{fig:metrics_q3-MOSES}, ~\ref{fig:metrics_q3-QM}, and ~\ref{fig:metrics_q3-ZINCk}, we compare the recall curves. On most of the datasets, NMLN* significantly outperforms diffusion models. On size 20, the recall is small for both models as the number of possible molecules is huge, but NMLN* still achieves a higher recall. We also observed that NMLN* was able to generate a significantly larger number of valid samples on the larger molecules.

\begin{figure*}[t]
    \centering

    \begin{subfigure}[b]{0.9\textwidth}  
        \centering
        \includegraphics[width=0.4\textwidth]{metrics_q3_v3/chembl20_legend_only.pdf}
    \end{subfigure}
    \vspace{0.2cm}
    \begin{subfigure}[b]{0.4\textwidth}
        \centering
        \includegraphics[width=\columnwidth]{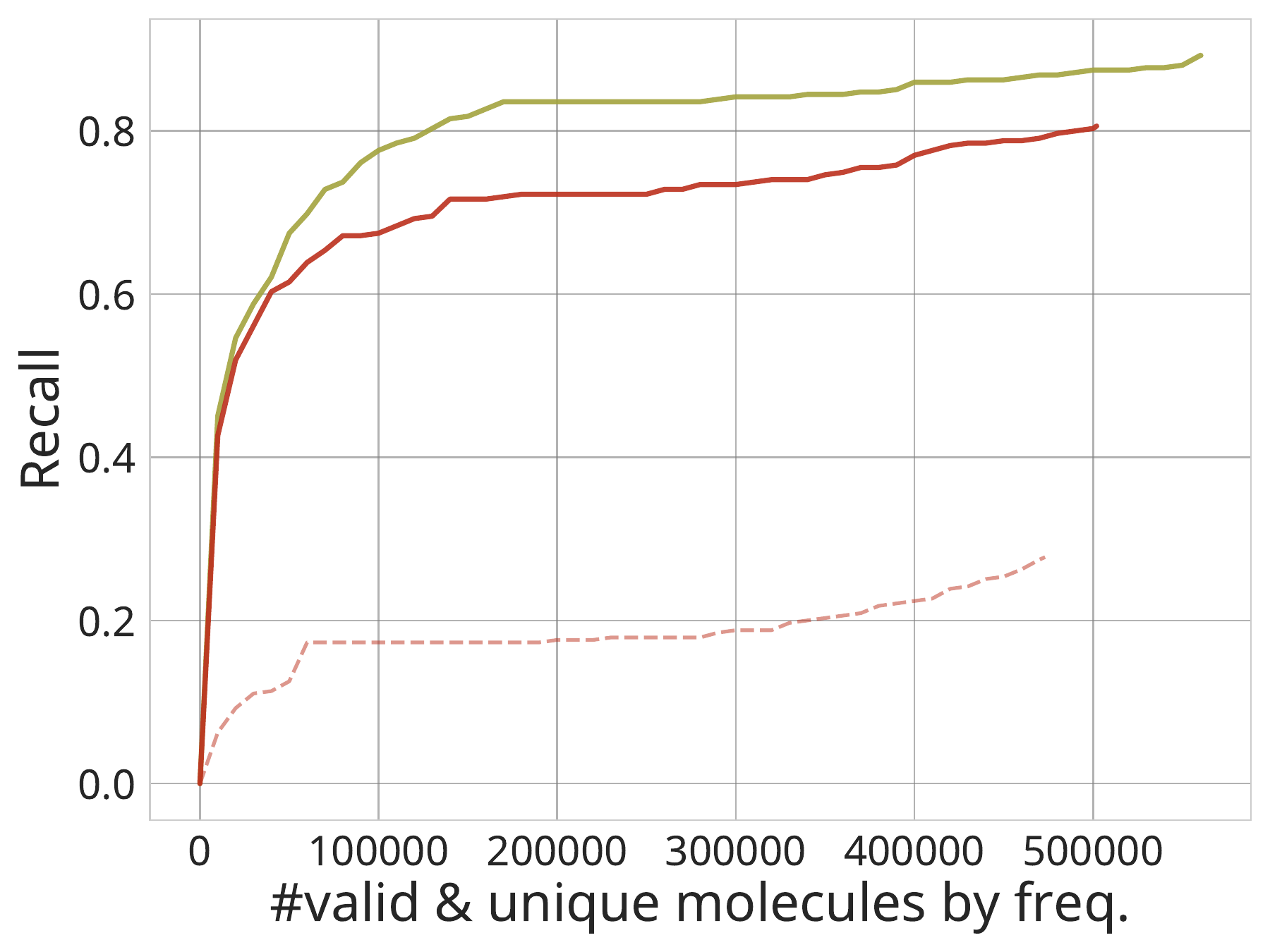}
        \caption{ChEMBL 9}
    \end{subfigure}
    \hfill
    \begin{subfigure}[b]{0.4\textwidth}
        \centering
        \includegraphics[width=\columnwidth]{metrics_q3_v3/chembl10_no_legend.pdf}
        \caption{ChEMBL 10}
    \end{subfigure}
    \hfill
    \begin{subfigure}[b]{0.4\textwidth}
        \centering
        \includegraphics[width=\columnwidth]{metrics_q3_v3/chembl15_no_legend.pdf}
        \caption{ChEMBL 15}
    \end{subfigure}
    \hfill
    \begin{subfigure}[b]{0.4\textwidth}
        \centering
        \includegraphics[width=\columnwidth]{metrics_q3_v3/chembl20_no_legend.pdf}
        \caption{ChEMBL 20}
    \end{subfigure}

    \caption{Recall curves for different sizes on the dataset ChEMBL.}
    \label{fig:metrics_q3-chembl}
\end{figure*}

\begin{figure*}[t]
    \centering
    \begin{subfigure}[b]{0.9\textwidth}  
        \centering
        \includegraphics[width=0.4\textwidth]{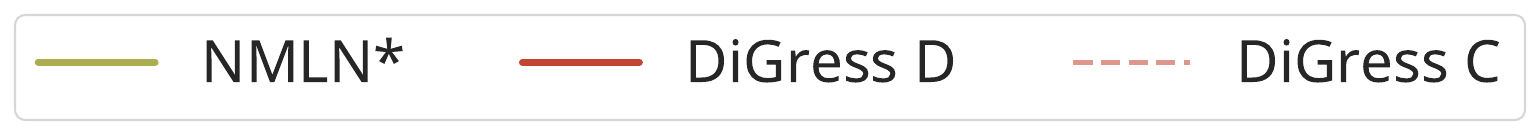}
    \end{subfigure}
    \vspace{0.2cm}
    \begin{subfigure}[b]{0.4\textwidth}
        \centering
        \includegraphics[width=\columnwidth]{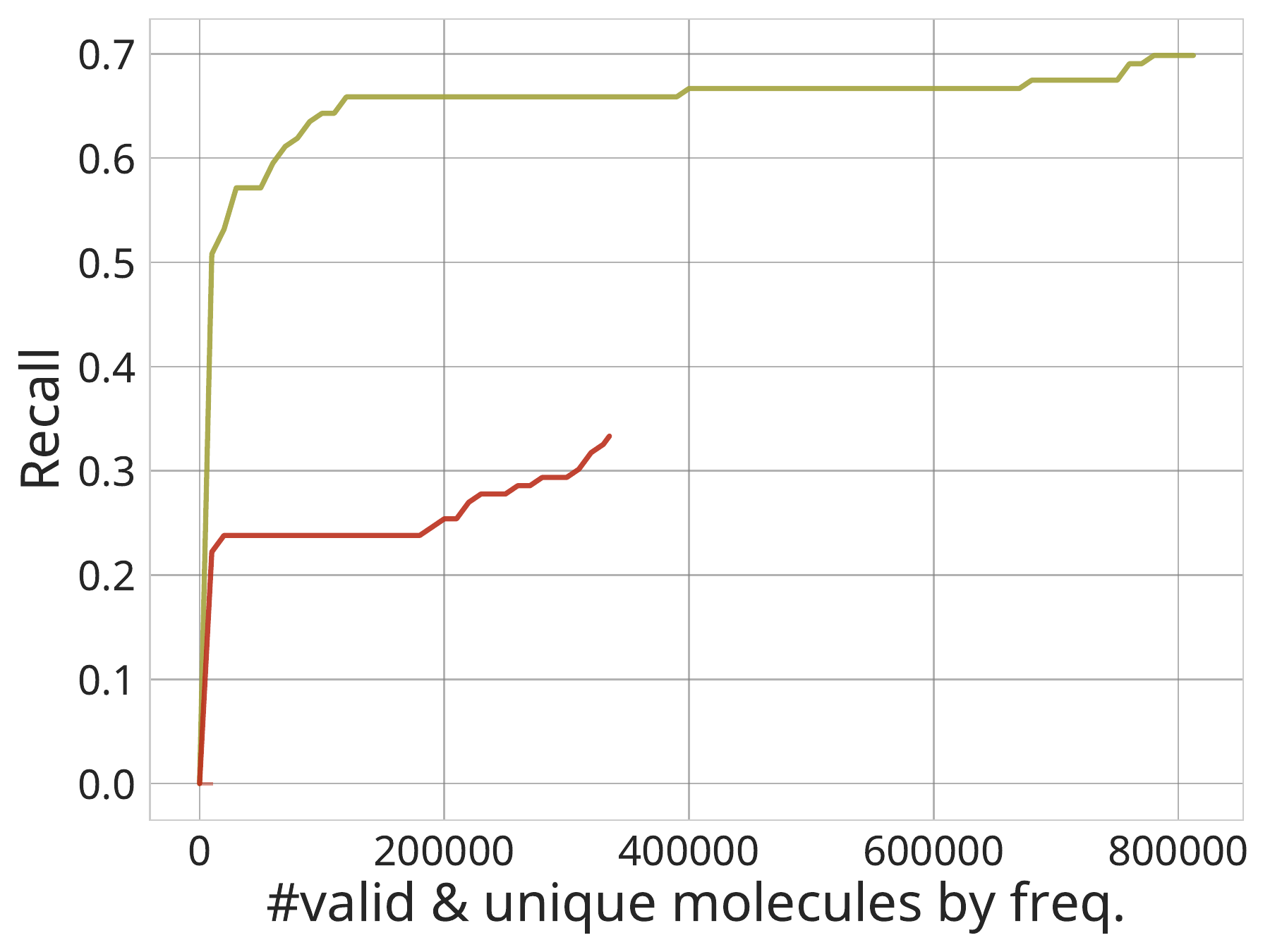}
        \caption{MOSES15}
    \end{subfigure}
    \begin{subfigure}[b]{0.4\textwidth}
        \centering
        \includegraphics[width=\columnwidth]{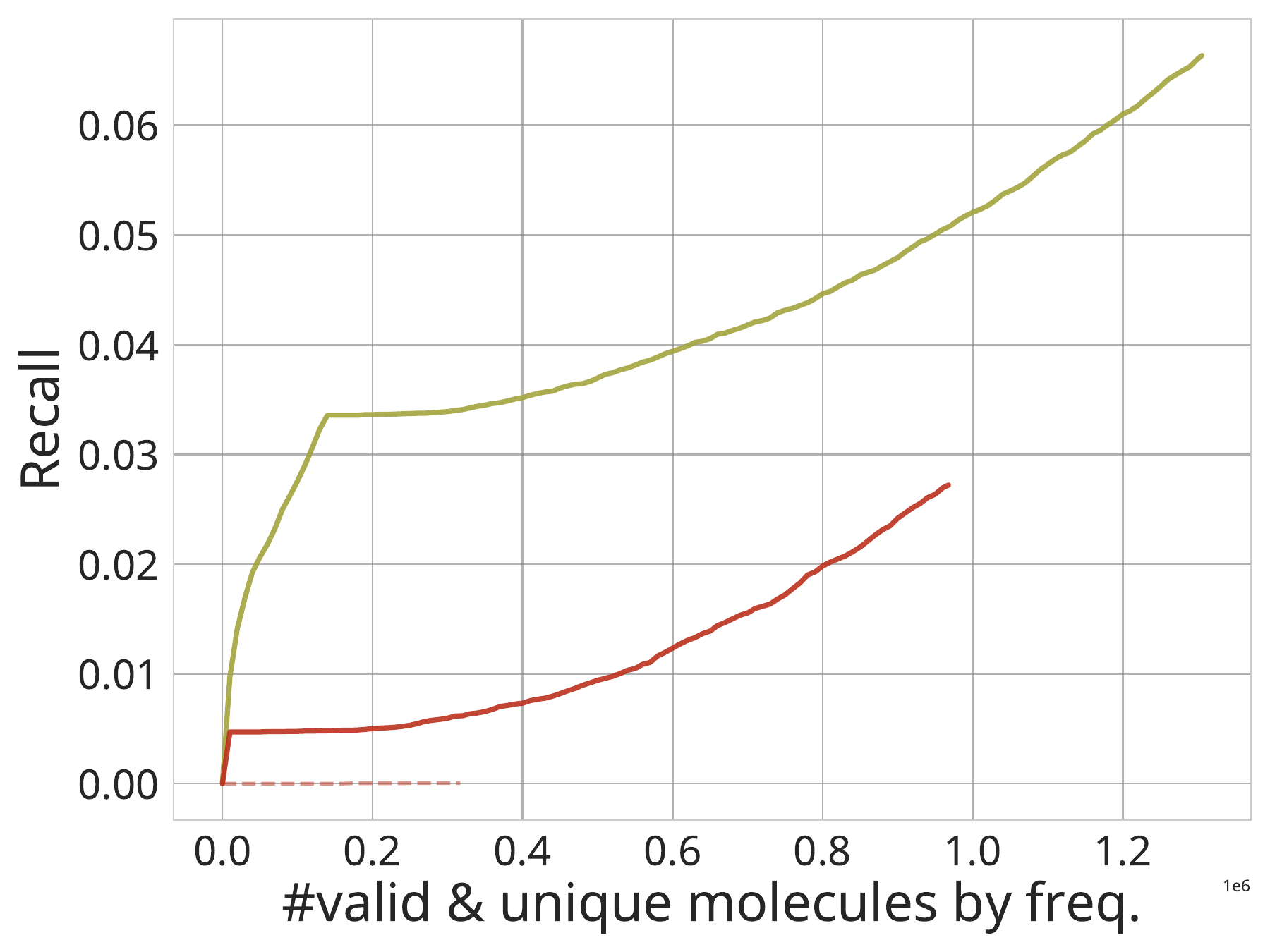}
        \caption{MOSES20}
    \end{subfigure}
    \caption{Recall curves for different sizes on the dataset MOSES.}
    \label{fig:metrics_q3-MOSES}
\end{figure*}

\begin{figure*}[ht]
    \centering
    \includegraphics[width=0.4\textwidth]{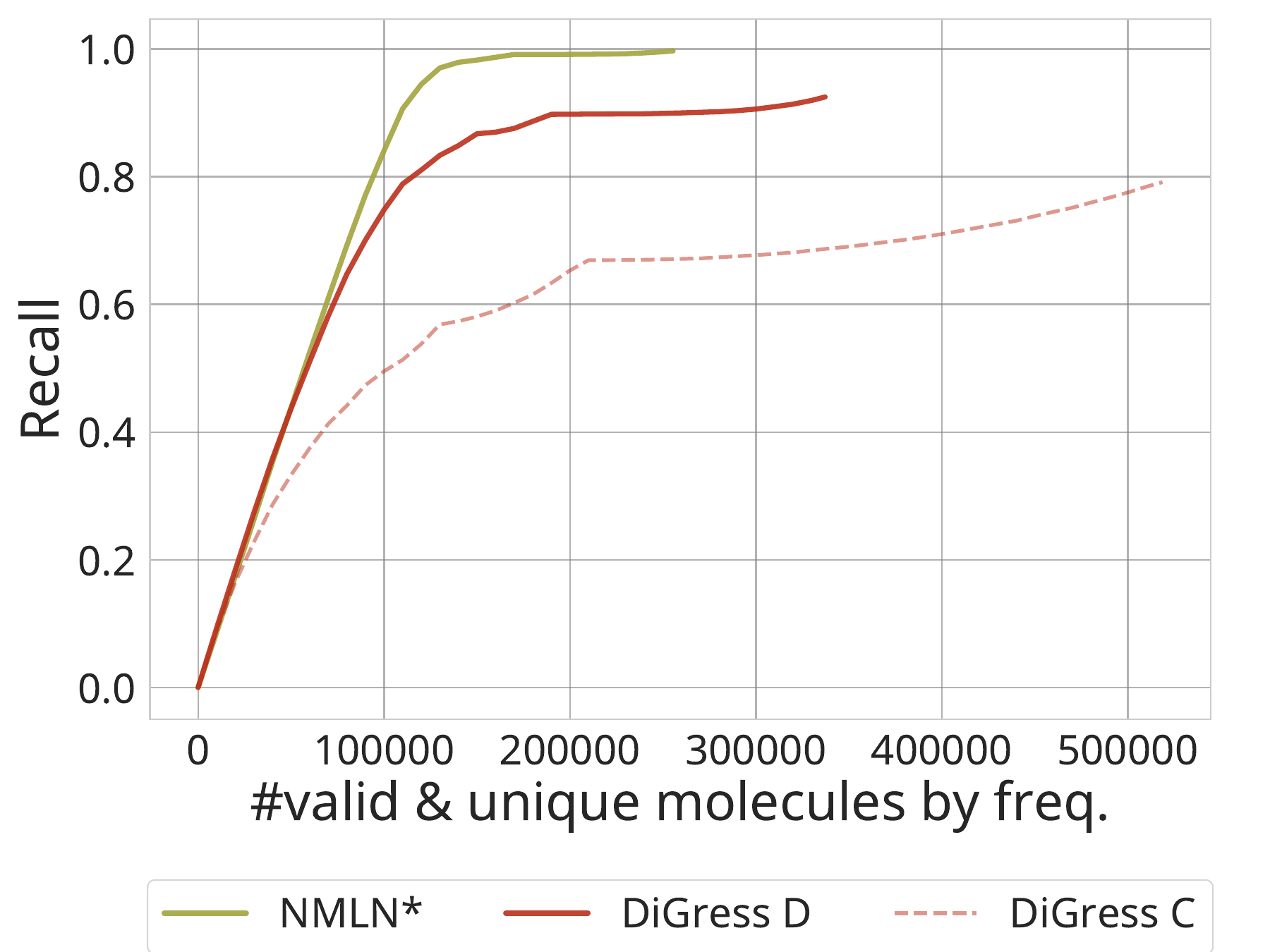}
    \caption{Recall curves for the QM9.}
    \label{fig:metrics_q3-QM}
\end{figure*}

\begin{figure*}[ht]
    \centering
    \begin{subfigure}[b]{0.9\textwidth}  
        \centering
        \includegraphics[width=0.4\textwidth]{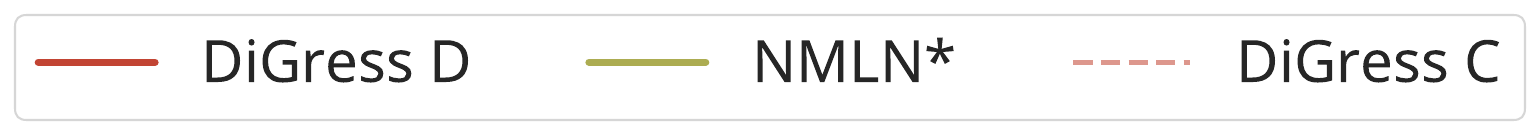}
    \end{subfigure}
    \vspace{0.2cm}
    \begin{subfigure}[b]{0.4\textwidth}
        \centering
        \includegraphics[width=\columnwidth]{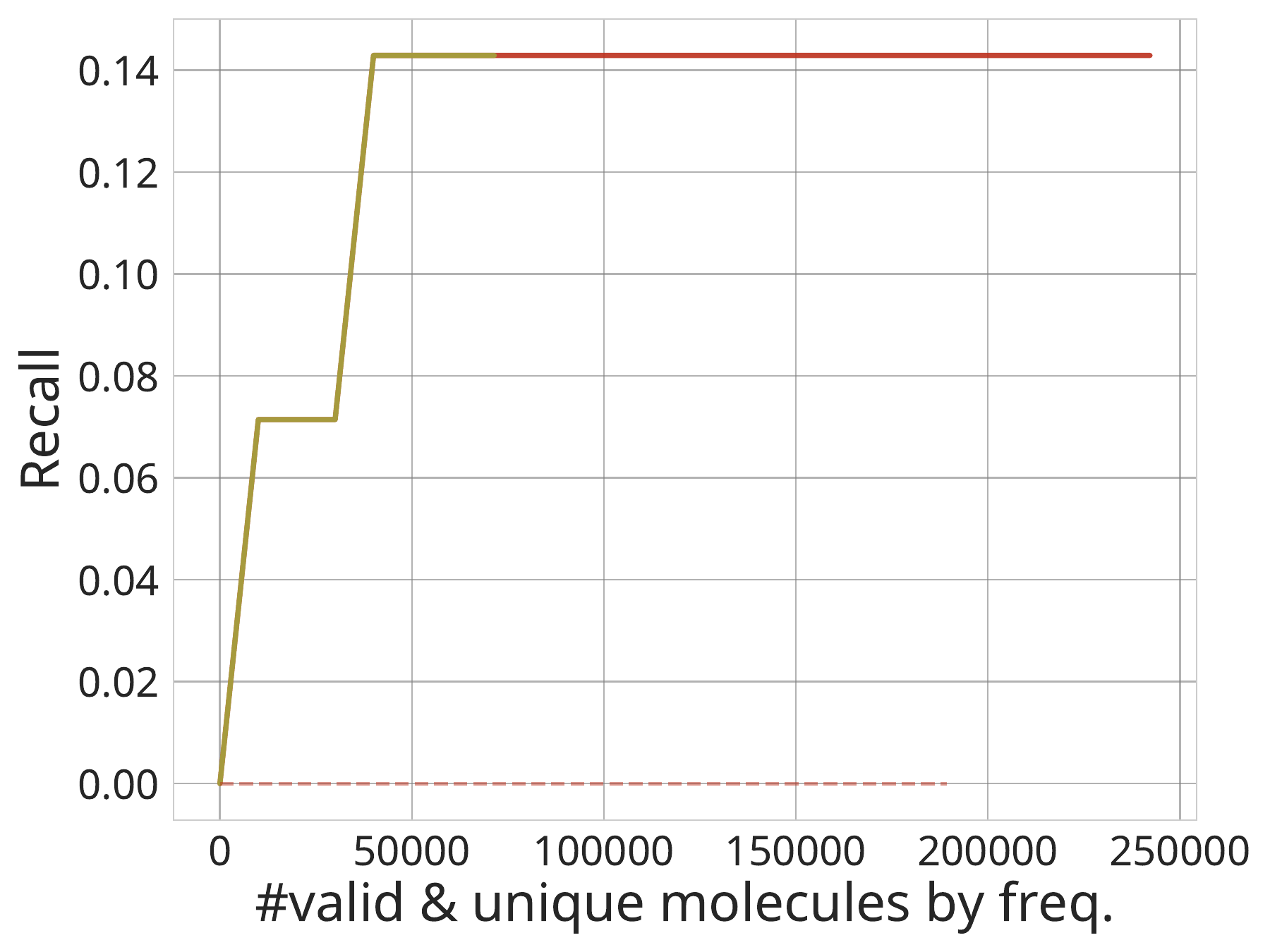}
        \caption{ZINC250k 9}
    \end{subfigure}
    \begin{subfigure}[b]{0.4\textwidth}
        \centering
        \includegraphics[width=\columnwidth]{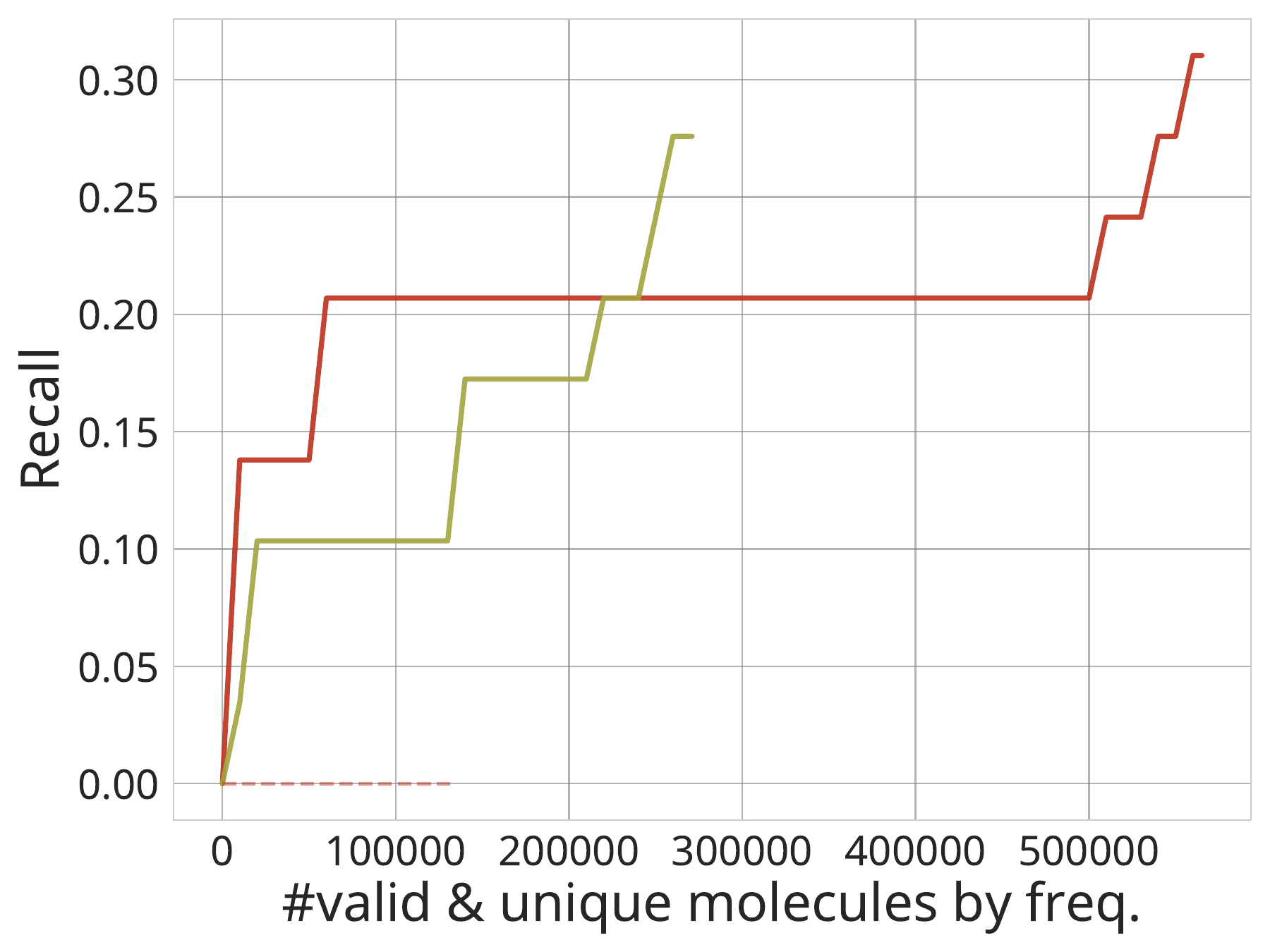}
        \caption{ZINC250k 10}
    \end{subfigure}
    \\
    \begin{subfigure}[b]{0.4\textwidth}
        \centering
        \includegraphics[width=\columnwidth]{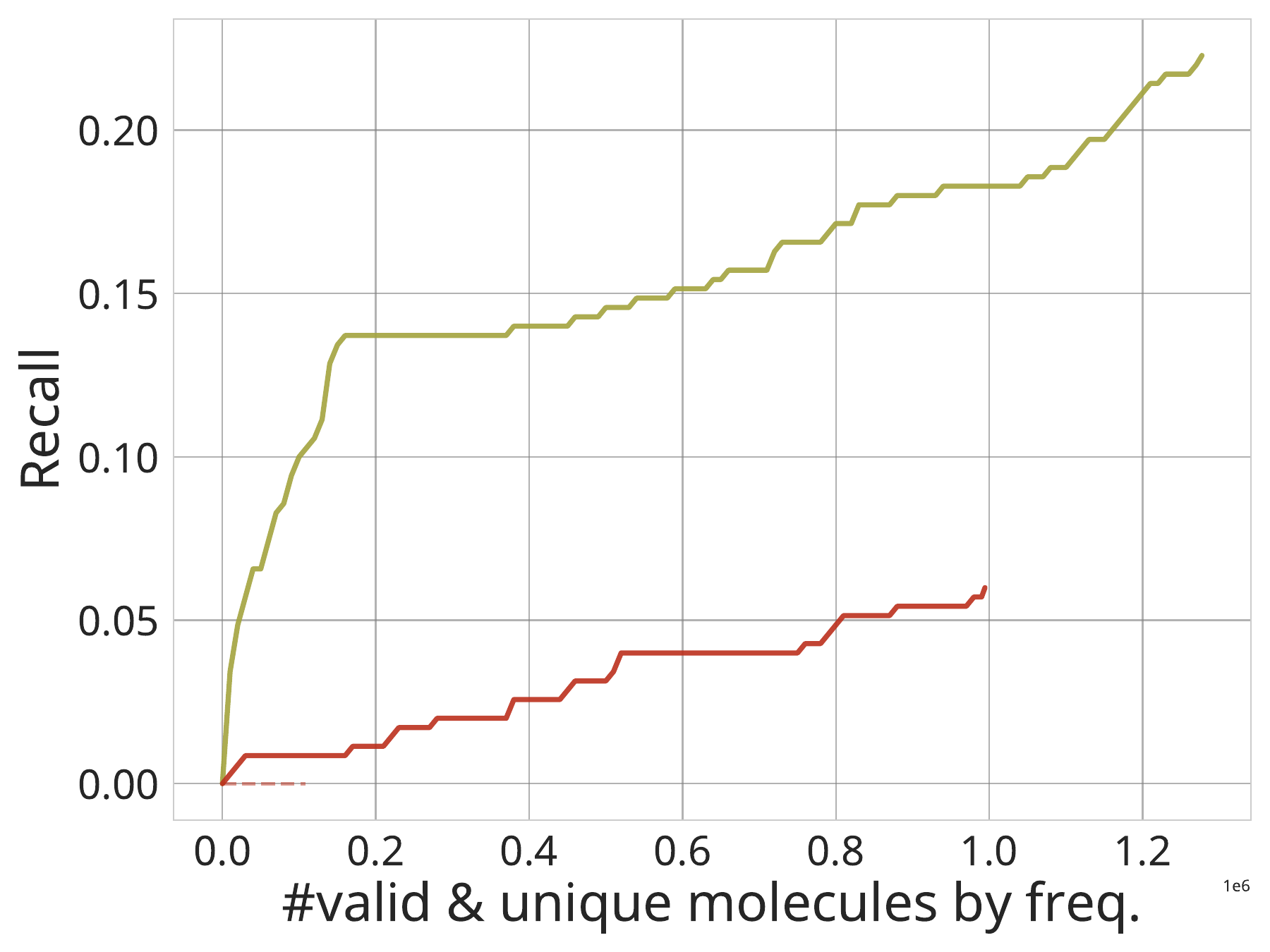}
        \caption{ZINC250k 15}
    \end{subfigure}
    \begin{subfigure}[b]{0.4\textwidth}
        \centering
        \includegraphics[width=\columnwidth]{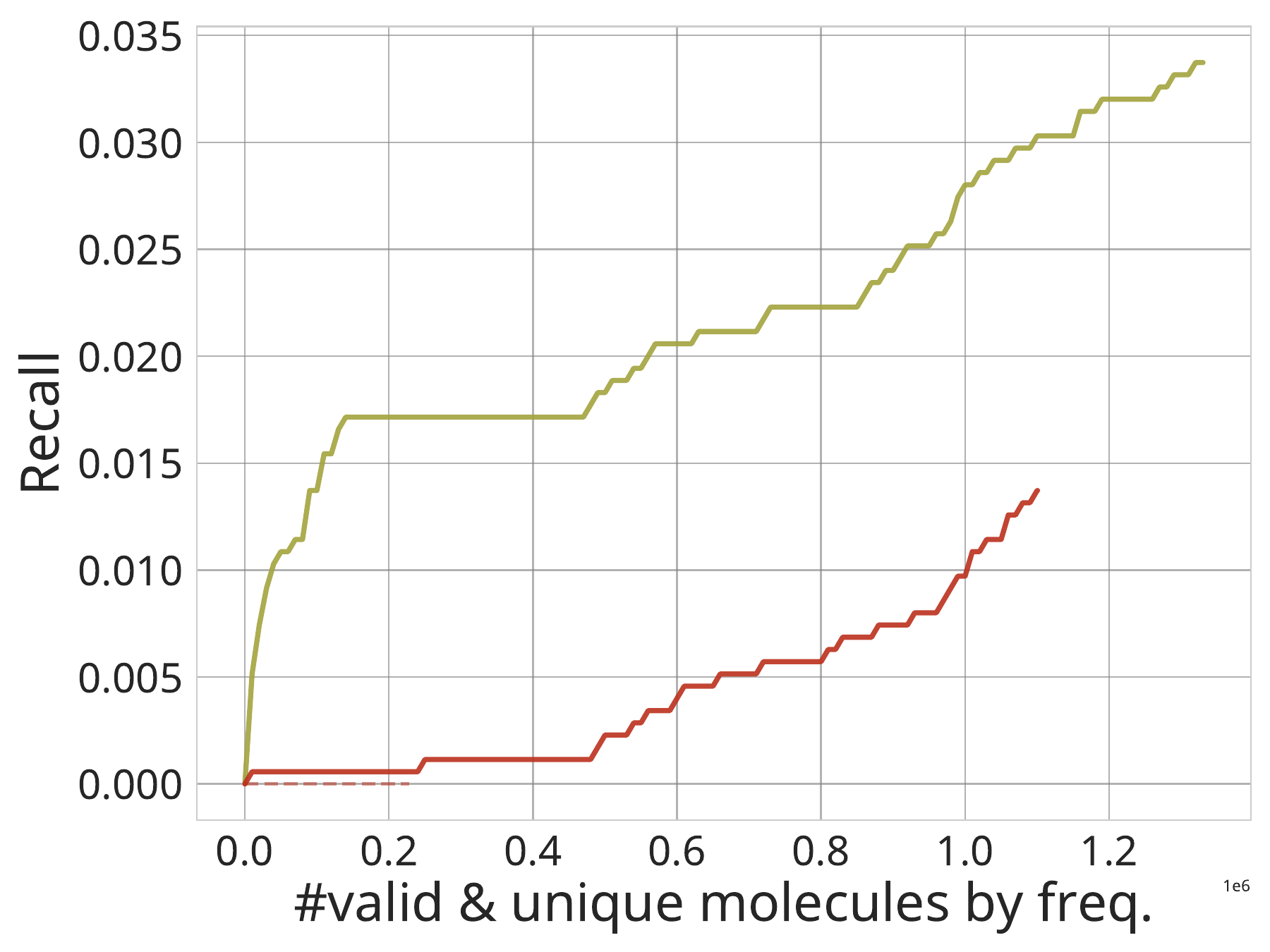}
        \caption{ZINC250k 20}
    \end{subfigure}
    \caption{Recall curves for different sizes on the dataset ZINC250k.}
    \label{fig:metrics_q3-ZINCk}
\end{figure*}

\subsubsection{Q4: How does NMLN* compare to specialized models for molecules?}

We further compared NMLN* against molecule-specific generative methods, which leverage domain-specific representations such as SMILES or SELFIES. These representations inherently encode chemical rules, enabling such models to achieve higher validity and more efficient exploration of molecular space. In Appendix Figure~\ref{fig:metrics_q4-chembl}, ~\ref{fig:metrics_q4-MOSES}, ~\ref{fig:metrics_q4-QM}, and ~\ref{fig:metrics_q4-ZINCk} we show that NMLN* outperforms molecule-specific baselines on smaller molecule sizes (9 and 10 heavy atoms), which shows their effectiveness even without relying on handcrafted molecular priors. However, at sizes 15 and 20, where both NMLN* and DiGresses begin to exhibit performance degradation, molecule-specific methods continue to cover a substantial portion of the test set. This suggests that the inductive biases encoded in molecular representations become increasingly beneficial as molecular complexity grows.

\begin{figure*}[t]
    \centering

    \vspace{0.2cm}
    \begin{subfigure}[b]{0.9\textwidth}  
        \centering
        \includegraphics[width=0.7\textwidth]{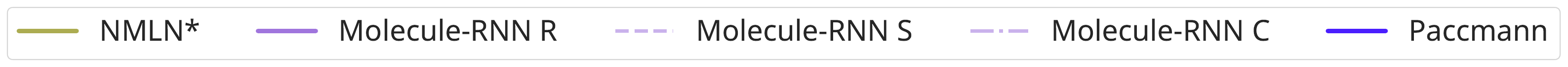}
    \end{subfigure}
    \vspace{0.2cm}
    \begin{subfigure}[b]{0.4\textwidth}
        \centering
        \includegraphics[width=\columnwidth]{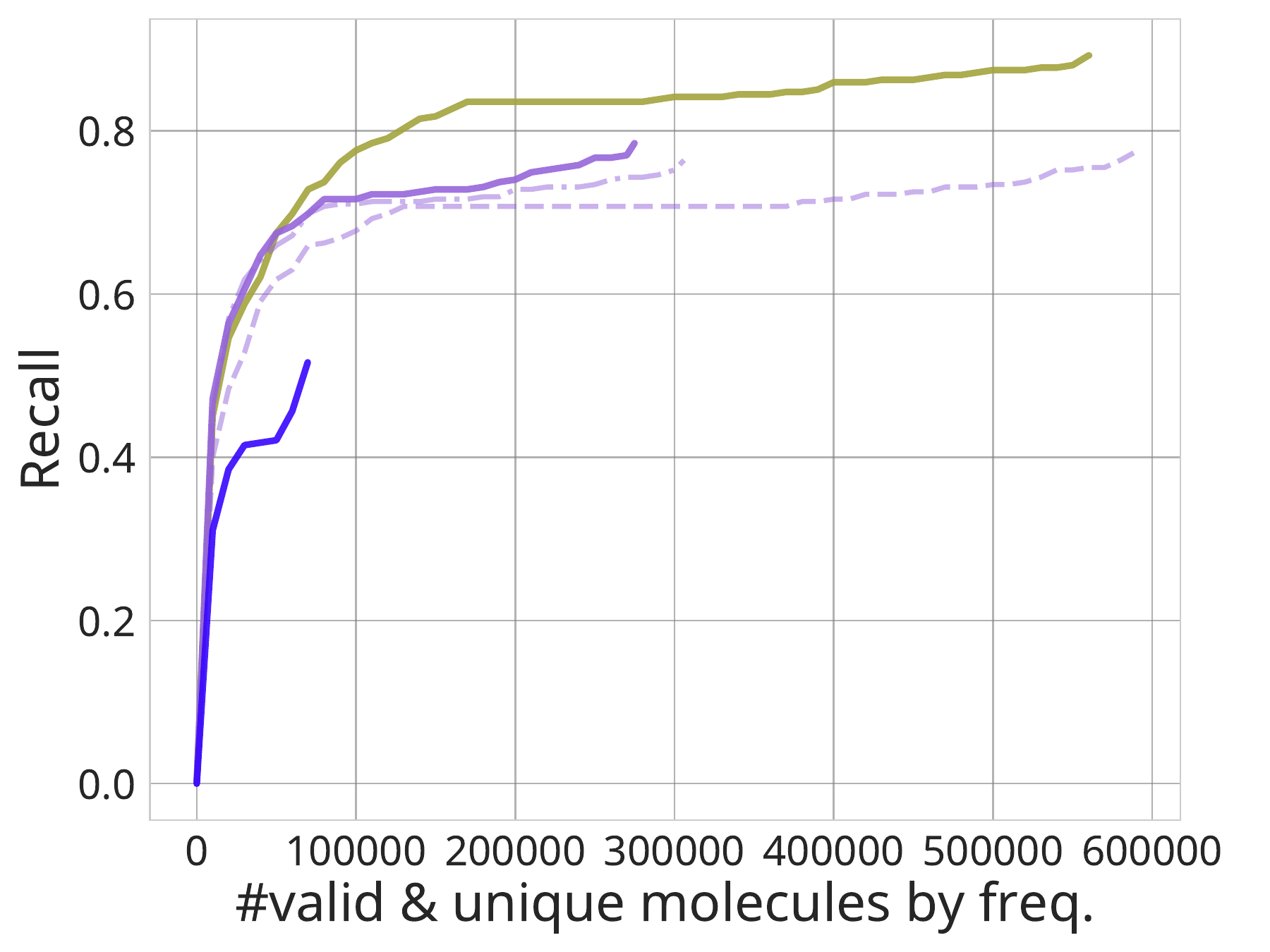}
        \caption{ChEMBL 9}
    \end{subfigure}
    \hfill
    \begin{subfigure}[b]{0.4\textwidth}
        \centering
        \includegraphics[width=\columnwidth]{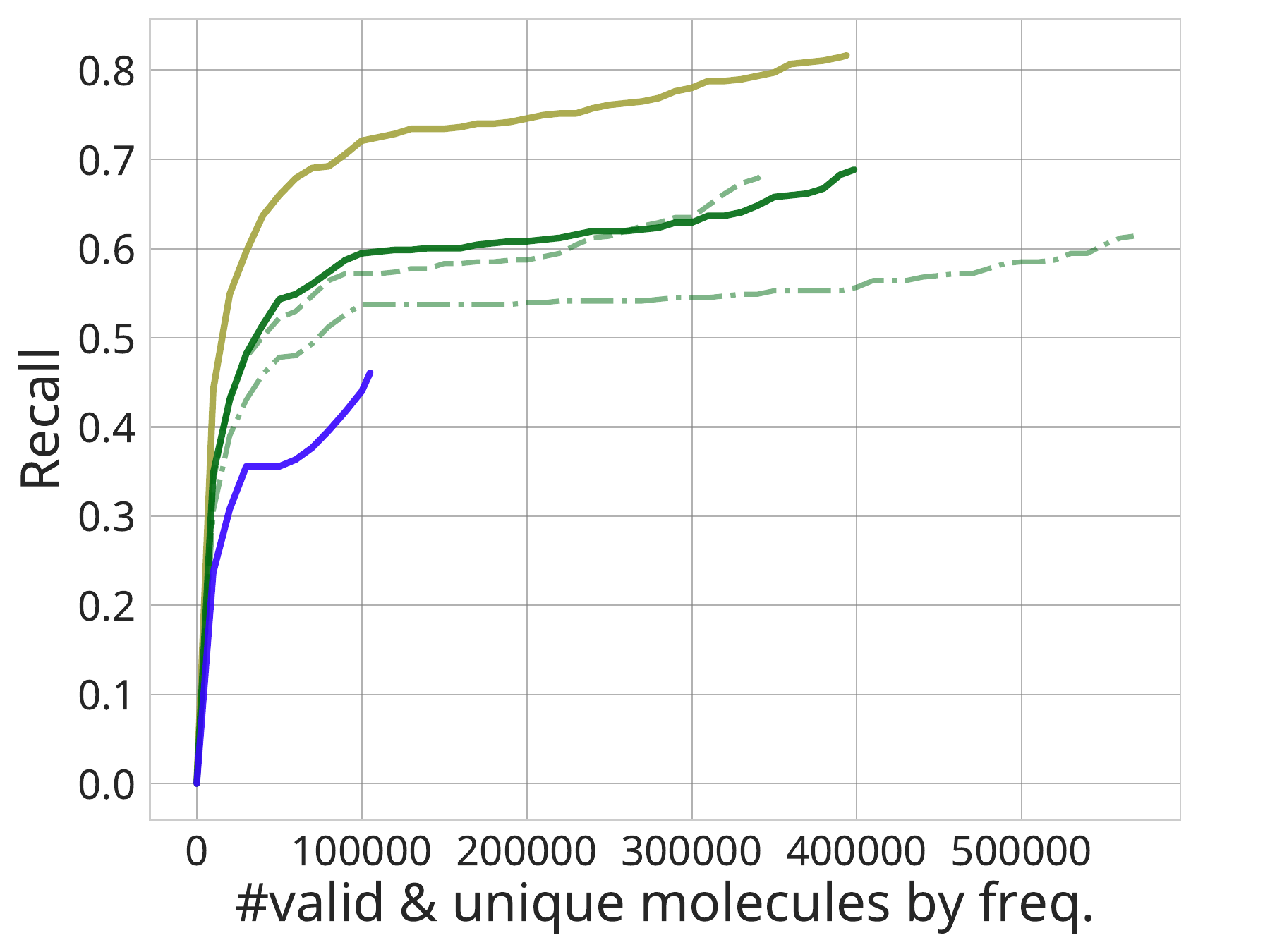}
        \caption{ChEMBL 10}
    \end{subfigure}
    \hfill
    \begin{subfigure}[b]{0.4\textwidth}
        \centering
        \includegraphics[width=\columnwidth]{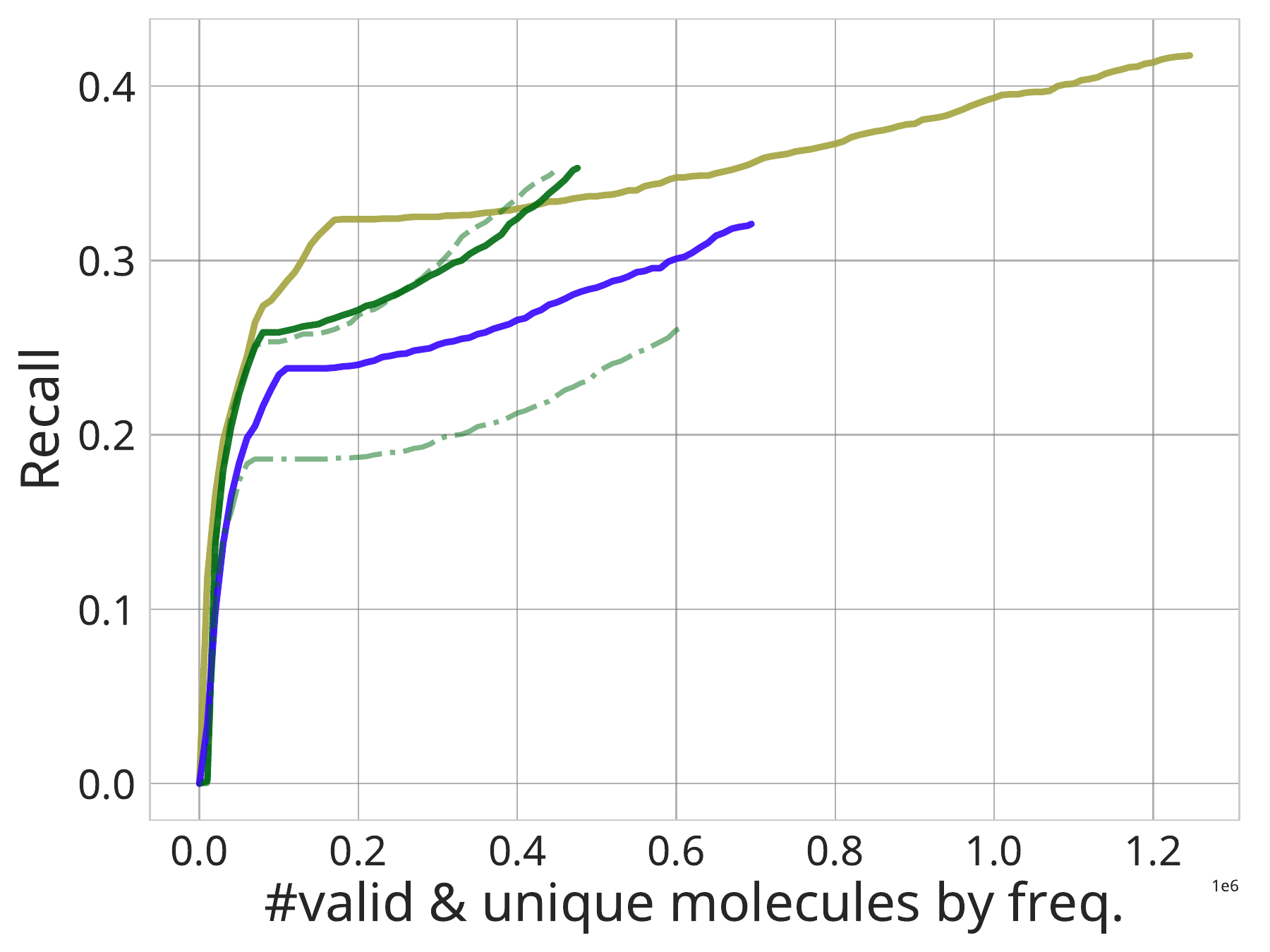}
        \caption{ChEMBL 15}
    \end{subfigure}
    \hfill
    \begin{subfigure}[b]{0.4\textwidth}
        \centering
        \includegraphics[width=\columnwidth]{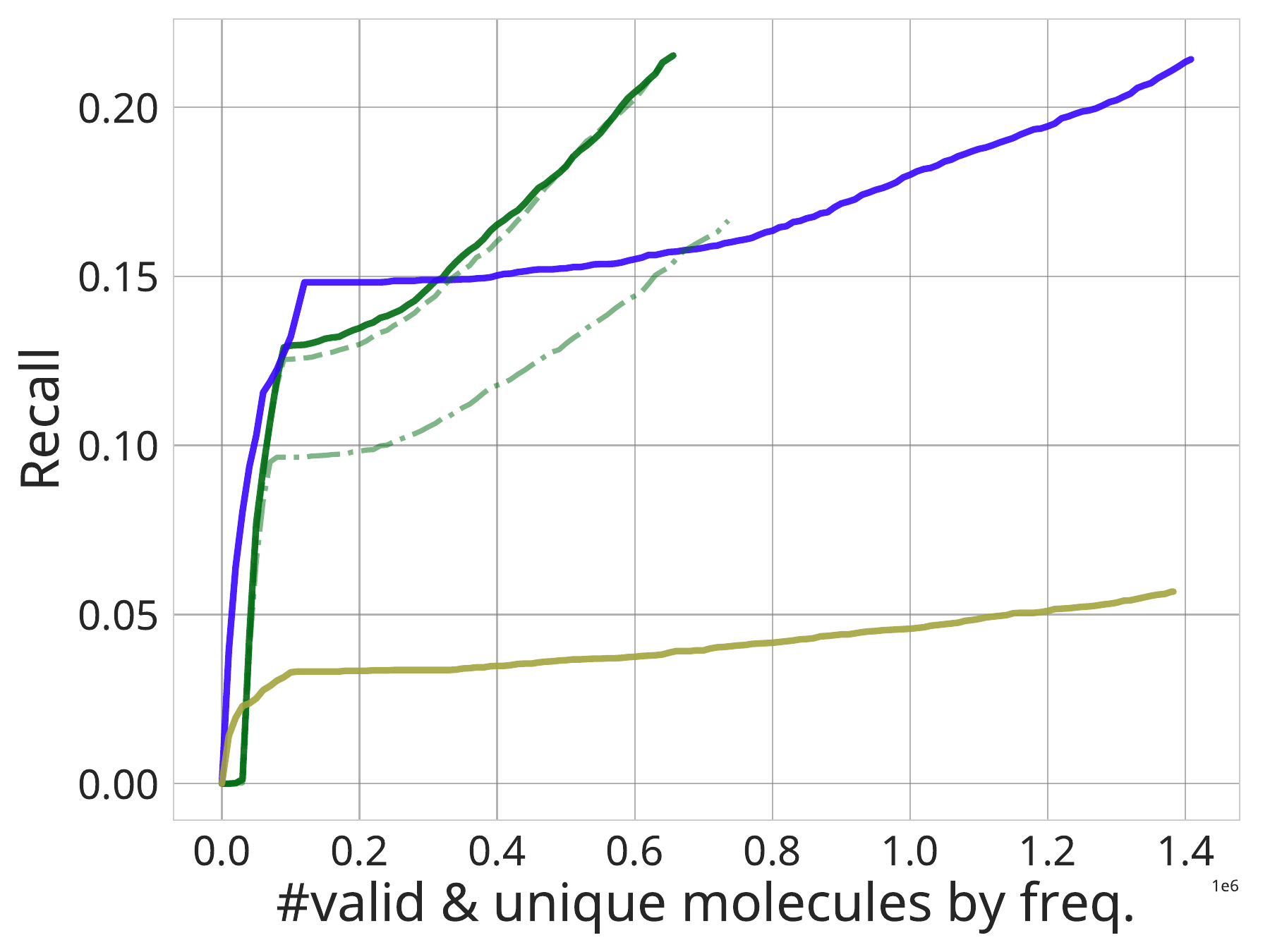}
        \caption{ChEMBL 20}
    \end{subfigure}

    \caption{Recall curves for different sizes on the dataset ChEMBL.}
    \label{fig:metrics_q4-chembl}
\end{figure*}

\begin{figure*}[t]
    \centering
    \begin{subfigure}[b]{0.9\textwidth}  
        \centering
        \includegraphics[width=0.7\textwidth]{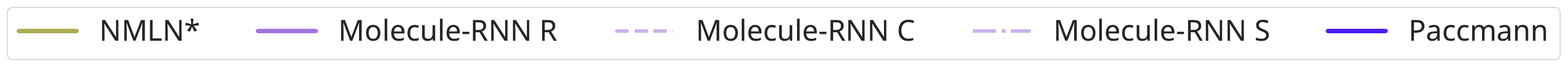}
    \end{subfigure}
    \vspace{0.2cm}
    \begin{subfigure}[b]{0.4\textwidth}
        \centering
        \includegraphics[width=\columnwidth]{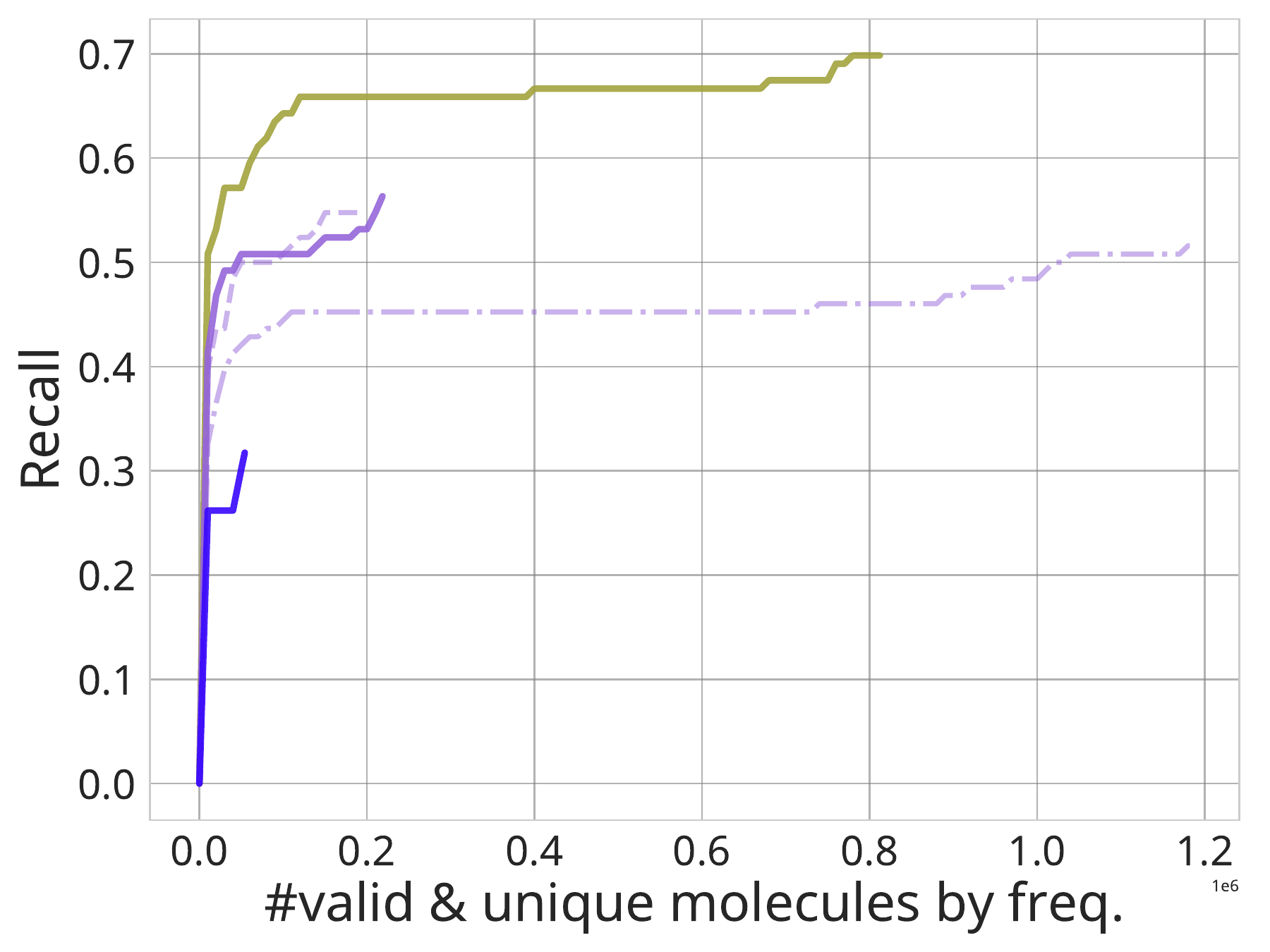}
        \caption{MOSES15}
    \end{subfigure}
    \begin{subfigure}[b]{0.4\textwidth}
        \centering
        \includegraphics[width=\columnwidth]{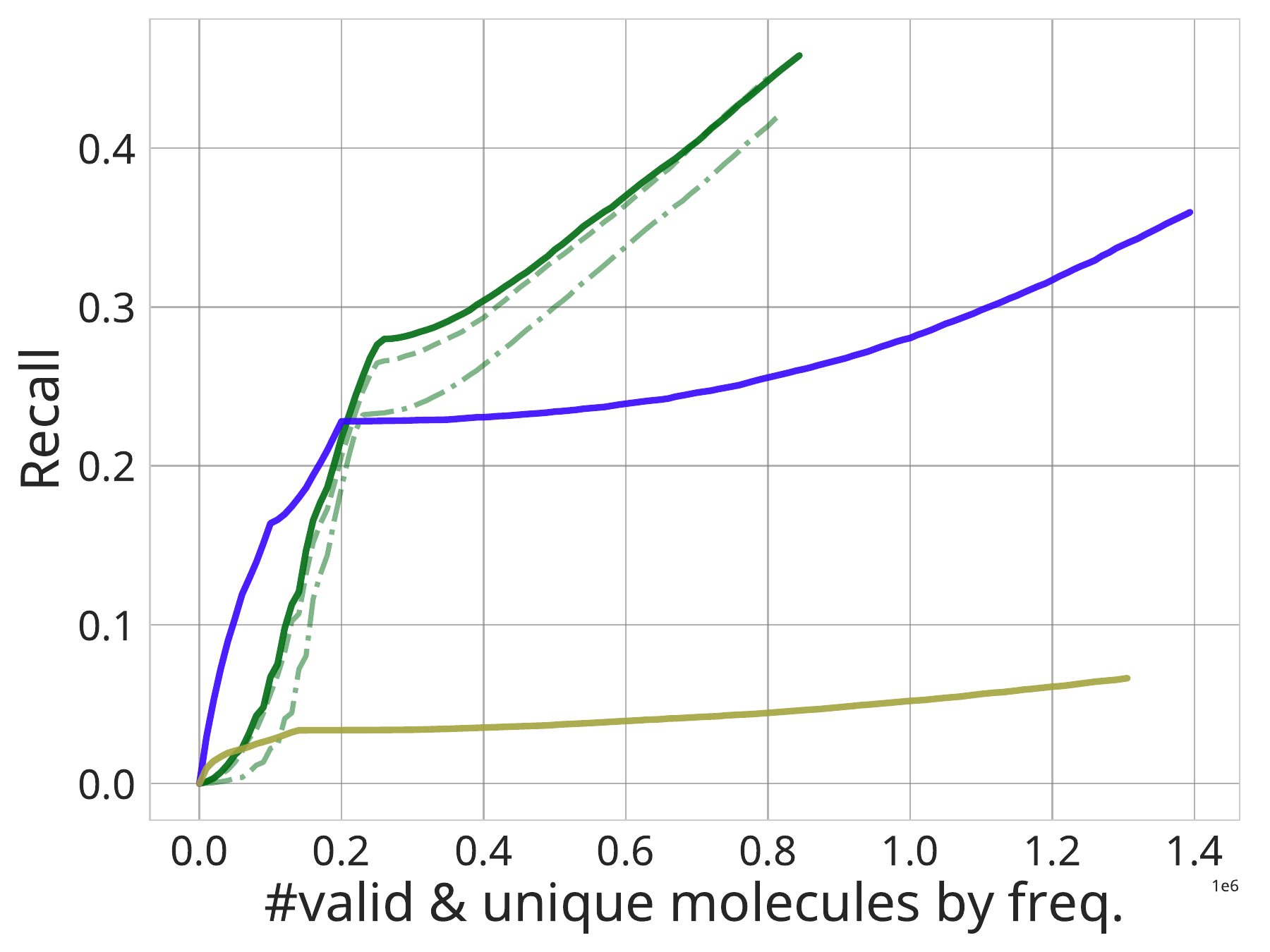}
        \caption{MOSES20}
    \end{subfigure}
    \caption{Recall curves for different sizes on the dataset MOSES.}
    \label{fig:metrics_q4-MOSES}
\end{figure*}

\begin{figure*}[t]
    \centering
    \includegraphics[width=0.4\textwidth]{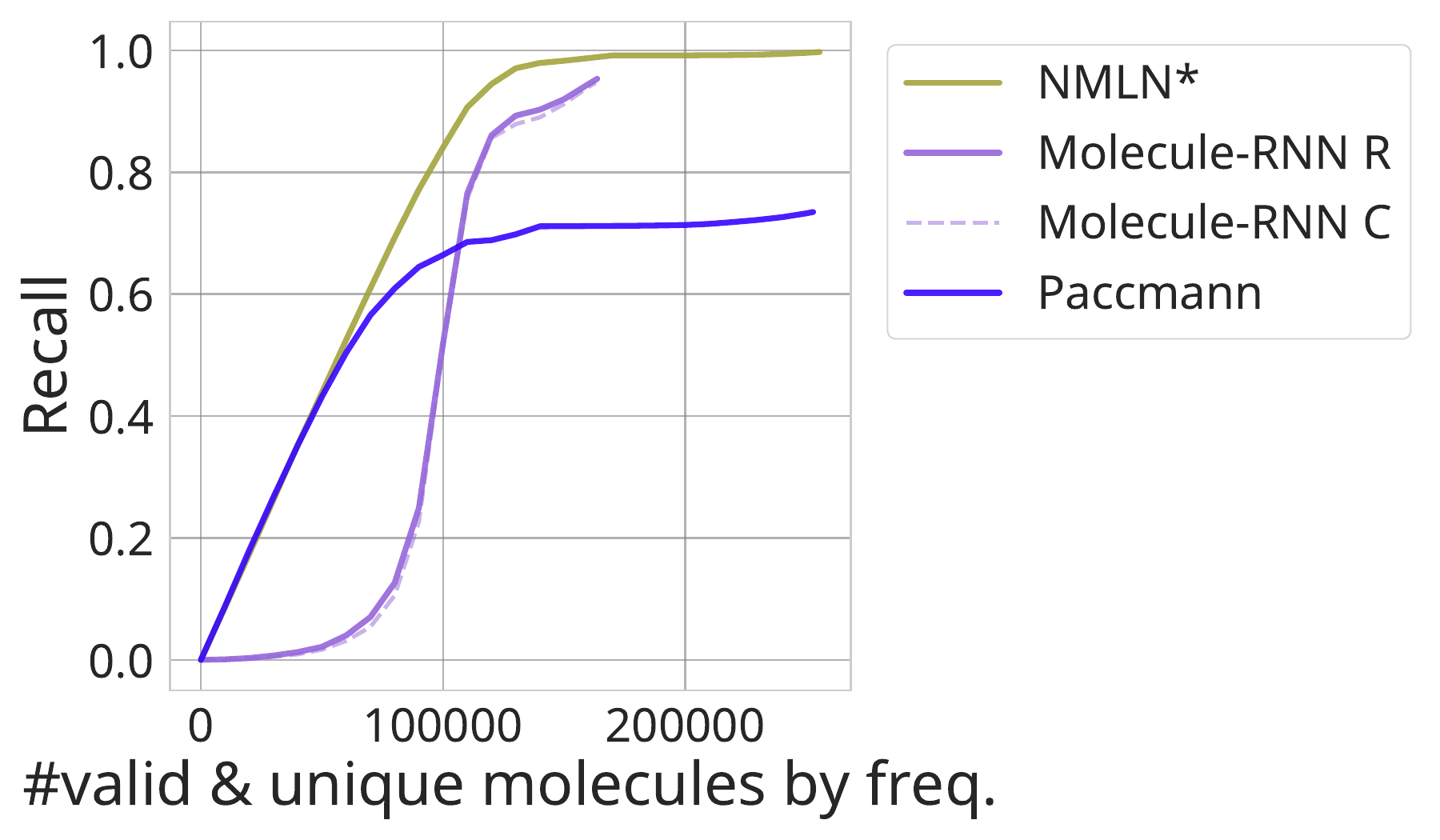}
    \caption{Recall curves for the QM9.}
    \label{fig:metrics_q4-QM}
\end{figure*}

\begin{figure*}[t]
    \centering

    \begin{subfigure}[b]{0.9\textwidth}  
        \centering
        \includegraphics[width=0.7\textwidth]{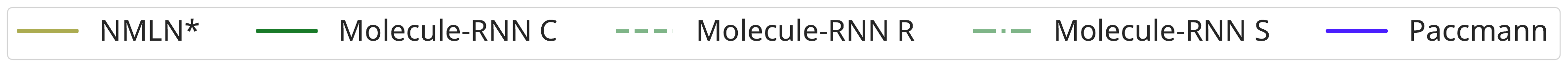}
    \end{subfigure}
    \vspace{0.2cm}

    \begin{subfigure}[b]{0.4\textwidth}
        \centering
        \includegraphics[width=\columnwidth]{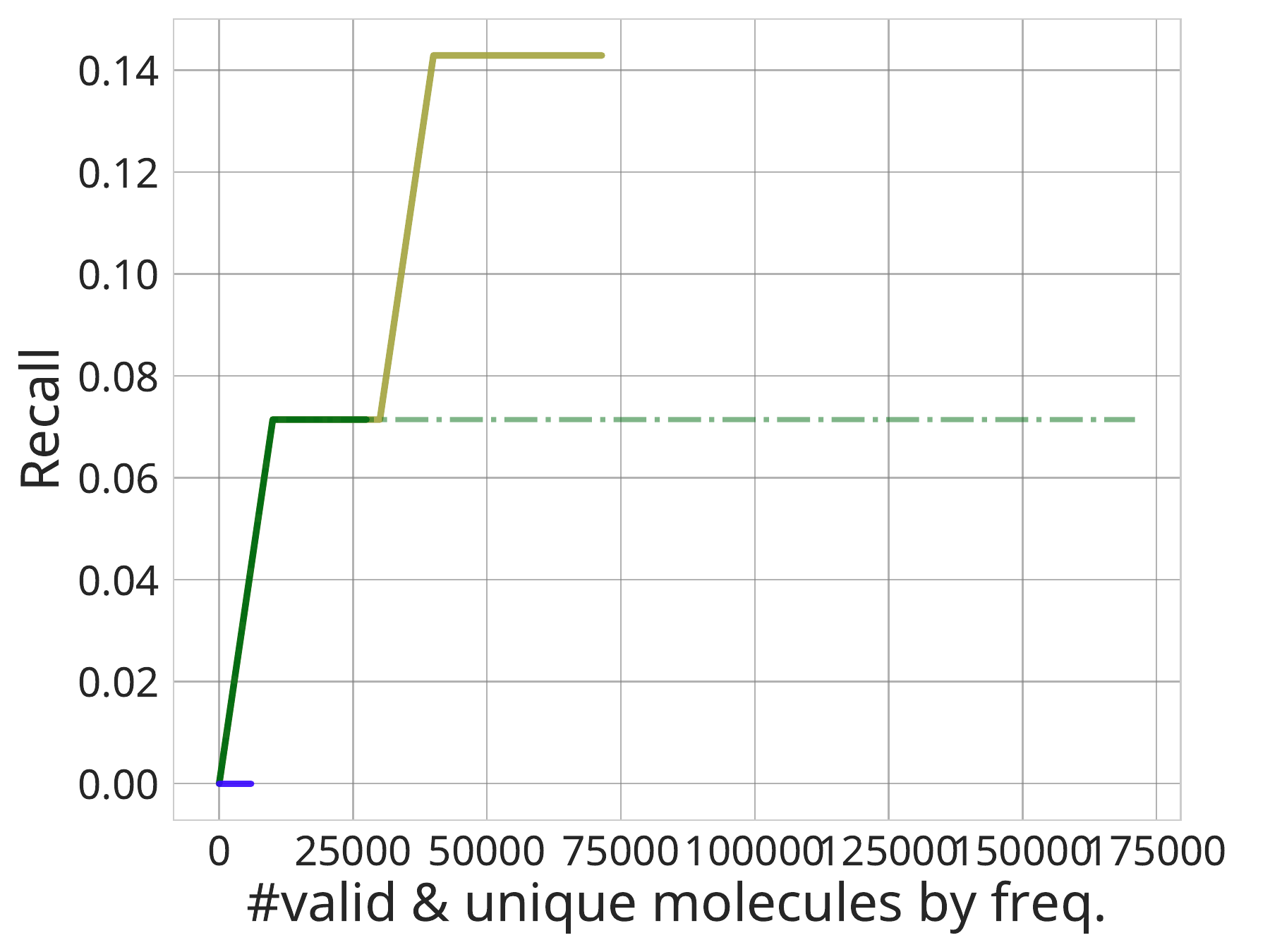}
        \caption{ZINC250k 9}
    \end{subfigure}
    \begin{subfigure}[b]{0.4\textwidth}
        \centering
        \includegraphics[width=\columnwidth]{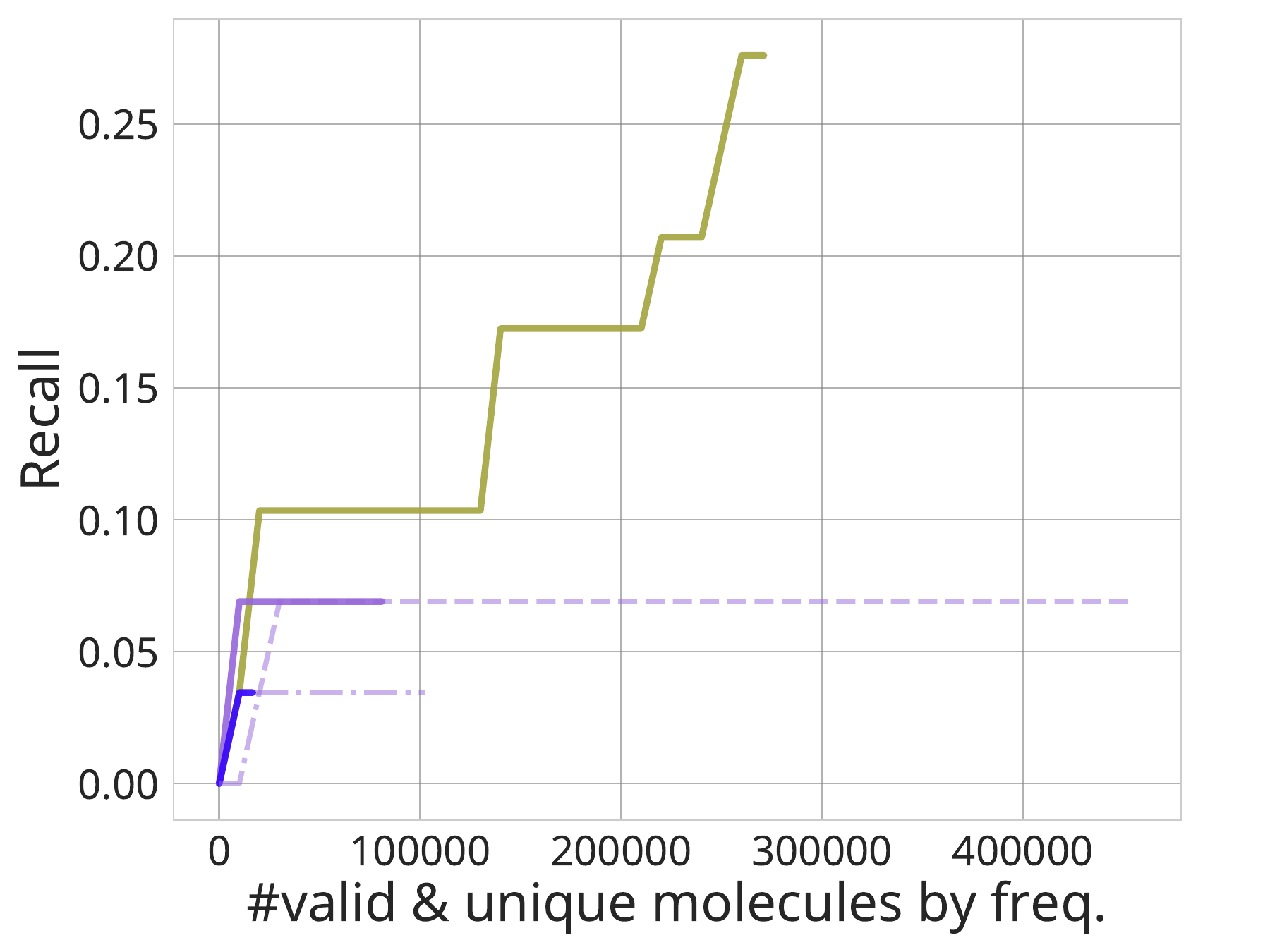}
        \caption{ZINC250k 10}
    \end{subfigure}
    \\
    \begin{subfigure}[b]{0.4\textwidth}
        \centering
        \includegraphics[width=\columnwidth]{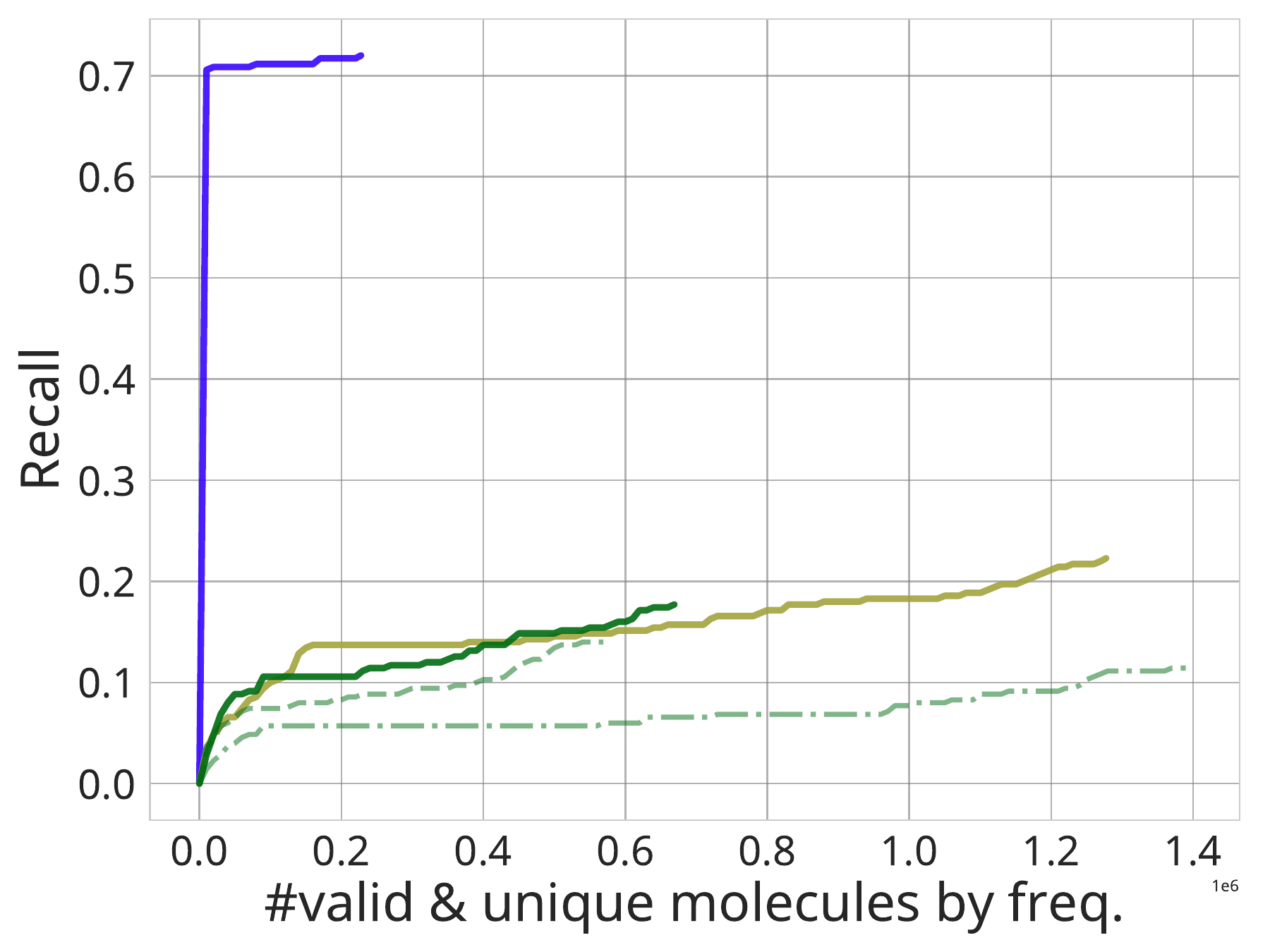}
        \caption{ZINC250k 15}
    \end{subfigure}
    \begin{subfigure}[b]{0.4\textwidth}
        \centering
        \includegraphics[width=\columnwidth]{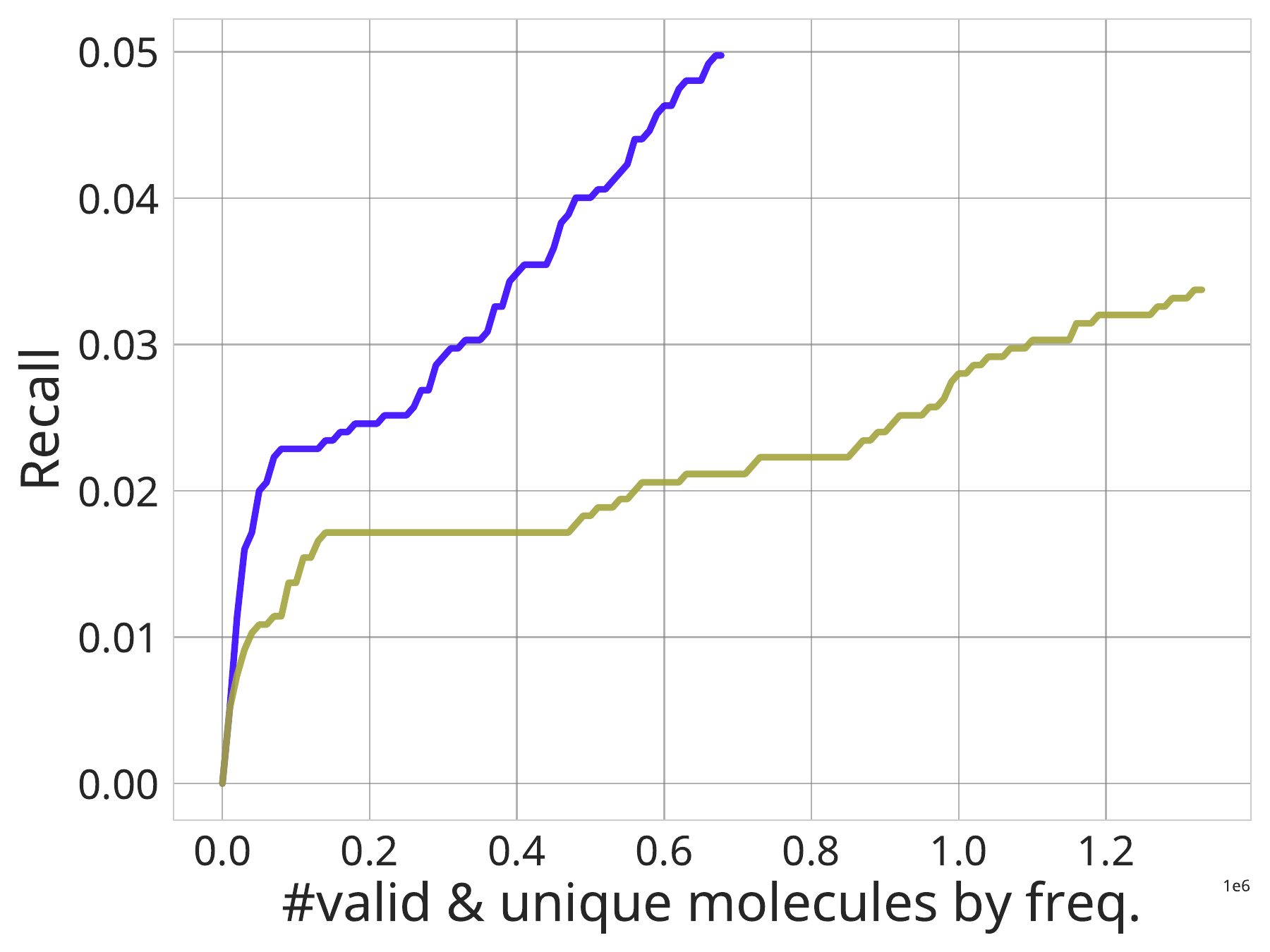}
        \caption{ZINC250k 20}
    \end{subfigure}
    \caption{Recall curves for different sizes on the dataset ZINC250k.}
    \label{fig:metrics_q4-ZINCk}
\end{figure*}

\subsubsection{A Peek Into the Chains of NMLNs*}

\paragraph{Validity Across Noise Levels}
Intuitively, chains in parallel noising with less noise should produce samples that are valid molecules more often than chains with more noise. We measured the fraction of valid molecules produced at each of the five NMLN* noise levels in parallel noising (Fig. \ref{fig:pn-nmln-valid-curve-levels}). Validity was near zero at the highest noise level (0.1) and increased monotonically as noise decreased, with the lowest level (0.001) yielding almost exclusively valid samples.

\paragraph{Recall Across Noise Levels} Figure \ref{fig:pn-nmln-recall-curve-levels} illustrates the recall curves of the five levels of single NMLN* from a single experiment, corresponding to progressively lower noise levels: 0.1, 0.01, 0.005, 0.0025, and 0.001. As expected, the first level (noise = 0.1) demonstrates limited recall due to its high stochasticity and exploratory behavior. In contrast, the final level (noise = 0.001) achieves the highest recall.

\begin{figure}[t]
    \centering
    \includegraphics[width=0.5\linewidth]{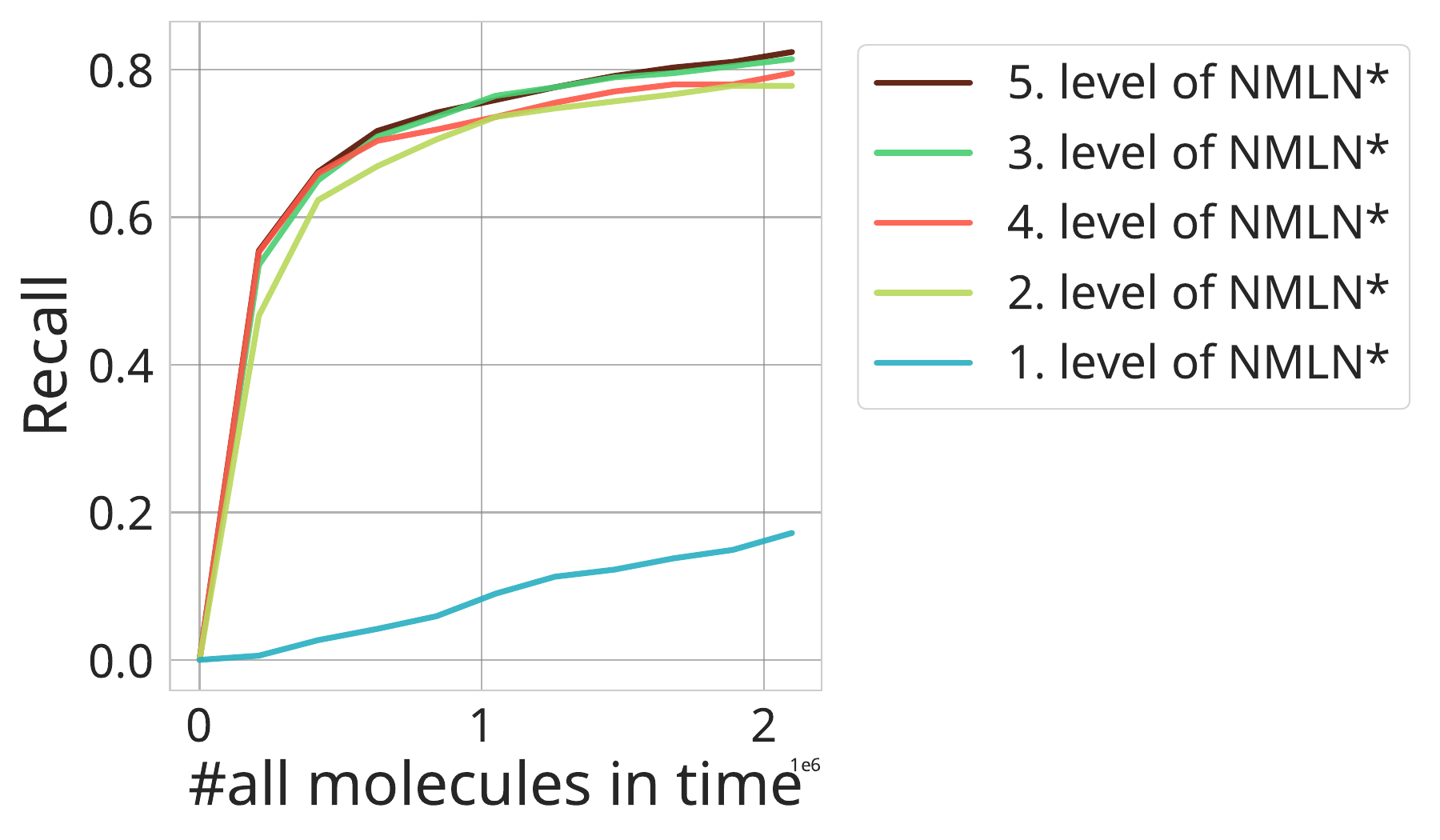}
    \caption{Recall curve of NMLN* on different levels on the dataset ChEMBL.}
    \label{fig:pn-nmln-recall-curve-levels}
\end{figure}

\begin{figure}[t]
    \centering
    \includegraphics[width=0.5\linewidth]{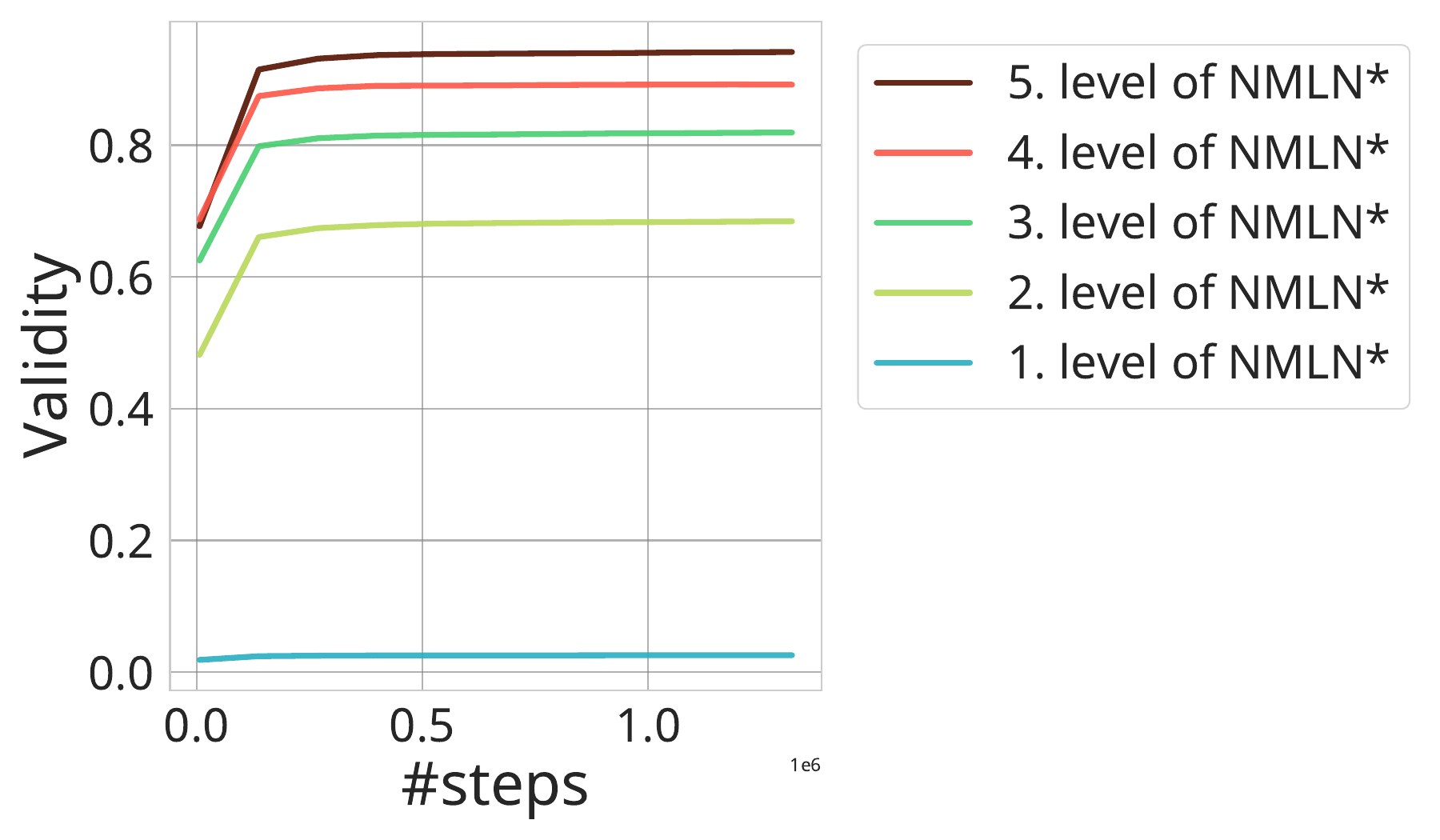}
    \caption{Validity of NMLN* on different levels on the dataset ChEMBL 10.}
    \label{fig:pn-nmln-valid-curve-levels}
\end{figure}

\section{Full Theory for Parallel Noising}\label{app:theory}
This section develops properties of parallel noising (PN) that are useful both conceptually and for designing practical ladders.
Throughout, $\Omega$ is a finite state space of possible worlds (for a fixed domain size) and each level $i$ targets $\pi_i(\omega)\propto \exp(\rho_i(\omega))$ as in Section~\ref{sec:pn}.

\subsection{Swap acceptance and distributional overlap}\label{sec:overlap}

Let $\mu=\pi_i\otimes\pi_{i+1}$ and $\mu^{\mathrm{swap}}=\pi_{i+1}\otimes\pi_i$ be distributions on $\Omega^2$.
Denote total variation distance by $\|\cdot\|_{\mathrm{TV}}$.

\begin{corollary}[A simple lower bound]\label{cor:tv-marginal}
\[
\mathbb{E}\big[\alpha_{i,i+1}(X,Y)\big]\ge 1-2\|\pi_i-\pi_{i+1}\|_{\mathrm{TV}}.
\]
\end{corollary}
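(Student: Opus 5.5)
The plan is to start from Proposition~\ref{prop:swap-overlap}. It gives $\mathbb{E}[\alpha_{i,i+1}(X,Y)]=1-\|\mu-\mu^{\mathrm{swap}}\|_{\mathrm{TV}}$ with $\mu=\pi_i\otimes\pi_{i+1}$ and $\mu^{\mathrm{swap}}=\pi_{i+1}\otimes\pi_i$. The corollary therefore reduces to the inequality
\[
\|\pi_i\otimes\pi_{i+1}-\pi_{i+1}\otimes\pi_i\|_{\mathrm{TV}}\le 2\|\pi_i-\pi_{i+1}\|_{\mathrm{TV}}.
\]

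To prove this, I will insert the intermediate product measure $\pi_i\otimes\pi_i$ and apply the triangle inequality:
\[
\|\mu-\mu^{\mathrm{swap}}\|_{\mathrm{TV}}\le \|\pi_i\otimes\pi_{i+1}-\pi_i\otimes\pi_i\|_{\mathrm{TV}}+\|\pi_i\otimes\pi_i-\pi_{i+1}\otimes\pi_i\|_{\mathrm{TV}}.
\]
Each term is a pair of product measures that agree in one coordinate. For such a pair, TV distance equals the TV distance of the differing marginals. On the finite space $\Omega$ this is a direct computation:
\[
\tfrac12\sum_{x,y}\pi_i(x)\,|\pi_{i+1}(y)-\pi_i(y)|=\tfrac12\sum_y|\pi_{i+1}(y)-\pi_i(y)|.
\]
The shared factor sums to one, and the same argument applies to the other term by symmetry. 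Each term therefore equals $\|\pi_i-\pi_{i+1}\|_{\mathrm{TV}}$, which gives the factor $2$. Substituting back into Proposition~\ref{prop:swap-overlap} yields the stated bound.

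There is no real obstacle. The only point to check is that the normalization of $\|\cdot\|_{\mathrm{TV}}$ is the same here as in Proposition~\ref{prop:swap-overlap}, namely half the $\ell_1$ norm, so that $\int\min\{d\mu,d\mu^{\mathrm{swap}}\}=1-\|\mu-\mu^{\mathrm{swap}}\|_{\mathrm{TV}}$ holds. The product-measure identity is valid under that same convention. If the bound is ever negative it holds trivially, so no case distinction is needed.
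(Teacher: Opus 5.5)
Your proof is correct and matches the paper's argument. The paper writes $\mu-\mu^{\mathrm{swap}}=\delta\otimes\pi_{i+1}-\pi_{i+1}\otimes\delta$ with $\delta=\pi_i-\pi_{i+1}$, which amounts to inserting $\pi_{i+1}\otimes\pi_{i+1}$ instead of your $\pi_i\otimes\pi_i$; it then applies the triangle inequality with $\|\delta\otimes\pi\|_1=\|\delta\|_1$ and invokes Proposition~\ref{prop:swap-overlap}, so the choice of intermediate measure makes no difference.
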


\begin{proof}
Write $\delta=\pi_i-\pi_{i+1}$. Then
\begin{align*}
\mu-\mu^{\mathrm{swap}}
&=\pi_i\otimes\pi_{i+1}-\pi_{i+1}\otimes\pi_i\\
&=\delta\otimes\pi_{i+1}-\pi_{i+1}\otimes\delta.
\end{align*}
Using $\|A\|_{\mathrm{TV}}=\tfrac12\|A\|_1$, the triangle inequality, and
$\|\delta\otimes\pi\|_1=\|\delta\|_1\|\pi\|_1=\|\delta\|_1$,
\begin{align*}
\|\mu-\mu^{\mathrm{swap}}\|_{\mathrm{TV}}
&\le \tfrac12(\|\delta\|_1+\|\delta\|_1)\\
&=\|\delta\|_1\\
&=2\|\pi_i-\pi_{i+1}\|_{\mathrm{TV}}.
\end{align*}
Apply Proposition~\ref{prop:swap-overlap}.
\end{proof}

\subsection{A distribution-free bound for Bernoulli noising}

We now specialize to the Bernoulli corruption operator $K_\nu$ from Section~\ref{sec:noising}.
For any distribution $p$ on $\{0,1\}^d$, let $p_\nu=pK_\nu$ denote the $\nu$-noised distribution.

\begin{proposition}[Total variation between adjacent noise levels]\label{prop:tv-noise}
For any $p$ and any $\nu,\nu'\in[0,1]$,
\[
\|p_\nu-p_{\nu'}\|_{\mathrm{TV}}\le d\,|\nu-\nu'|.
\]
\end{proposition}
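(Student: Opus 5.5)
The plan is to reduce the claim to a single-coordinate statement and then combine coordinates by a coupling (hybrid) argument. First, total variation contracts under any Markov kernel (data processing): for any distributions $a,b$ on $\{0,1\}^d$, $\|aK-bK\|_{\mathrm{TV}}\le\|a-b\|_{\mathrm{TV}}$. So it suffices to compare the kernels themselves pointwise. Since $p_\nu-p_{\nu'}=\sum_\omega p(\omega)\bigl(K_\nu(\omega\to\cdot)-K_{\nu'}(\omega\to\cdot)\bigr)$, convexity of the $\ell_1$ norm gives
\[
\|p_\nu-p_{\nu'}\|_{\mathrm{TV}}\le \max_{\omega}\,\|K_\nu(\omega\to\cdot)-K_{\nu'}(\omega\to\cdot)\|_{\mathrm{TV}}.
\]
This removes all dependence on $p$ and explains why the bound is distribution-free.

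Next, fix $\omega$. Both $K_\nu(\omega\to\cdot)$ and $K_{\nu'}(\omega\to\cdot)$ are product measures over the $d$ coordinates. Coordinate $j$ is a Bernoulli variable that equals $1-\omega_j$ with probability $\nu$ under the first and $\nu'$ under the second. The one-dimensional TV distance between these is exactly $|\nu-\nu'|$. I will then invoke subadditivity of TV for product measures,
\[
\Bigl\|\textstyle\bigotimes_j a_j-\bigotimes_j b_j\Bigr\|_{\mathrm{TV}}\le\sum_j\|a_j-b_j\|_{\mathrm{TV}},
\]
which yields the bound $d|\nu-\nu'|$.

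To prove subadditivity, I would write the difference as a telescoping sum of hybrids $a_1\otimes\cdots\otimes a_k\otimes b_{k+1}\otimes\cdots\otimes b_d$. Each consecutive difference has the form $(\text{prob.})\otimes(a_k-b_k)\otimes(\text{prob.})$, and its $\ell_1$ norm is $\|a_k-b_k\|_1$, by the same product identity $\|\delta\otimes\pi\|_1=\|\delta\|_1$ already used in the proof of Corollary~\ref{cor:tv-marginal}. Alternatively, a coupling works directly: draw $U_j\sim\mathrm{Unif}[0,1]$ independently and flip bit $j$ under level $\nu$ iff $U_j<\nu$, and under level $\nu'$ iff $U_j<\nu'$. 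Then the two outputs differ only if some $U_j$ falls between $\nu$ and $\nu'$, and a union bound gives probability at most $d|\nu-\nu'|$. This coupling is even applied to $p$ directly, by using the same $\omega\sim p$ for both outputs, so it bypasses the convexity step as well.

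I expect no real obstacle here. The only care needed is in the bookkeeping of $\mathrm{TV}=\tfrac12\ell_1$ normalizations, and I would present the coupling version as the cleanest argument. For the categorical channel used for molecules, the same coupling argument applies groupwise, with $d$ read as the number of corrupted groups.
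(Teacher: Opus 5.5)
Your proposal is correct, and the coupling you say you would present (shared $X\sim p$, shared uniforms $U_j$ thresholded at $\nu$ and $\nu'$, a union bound over coordinates, then the coupling characterization of total variation) is exactly the paper's proof. Your convexity-plus-hybrid argument is a valid analytic rephrasing of the same bound. One small correction there: the reduction to comparing kernels pointwise comes from convexity of the $\ell_1$ norm, not from data processing, which concerns a single kernel applied to two different inputs.
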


\paragraph{Interpretation.}
Proposition~\ref{prop:tv-noise} is a \emph{distribution-free} (worst-case) sanity check: it depends on the ambient dimension $d$ of the ground-atom encoding and can therefore be extremely loose for realistic relational or molecular representations (where $d$ may be in the thousands or more).
In particular, the implied acceptance lower bound in Corollary~\ref{cor:acc-lb} becomes vacuous once $d|\nu_i-\nu_{i+1}|$ is not small.
We therefore \emph{do not} use this bound to tune noise schedules.

Instead, in practice we select and validate ladders using the overlap/acceptance characterization from Section~\ref{sec:overlap}:
we monitor adjacent swap acceptance rates during training and choose $\{\nu_i\}$ such that adjacent pairs maintain non-trivial acceptance.
We also report recall/quality as a function of compute (number of energy evaluations) when comparing different ladders or numbers of replicas (Section~\ref{sec:experiments}).

\begin{proof}
Construct a coupling $(Y,Y')$ with $Y\sim p_\nu$ and $Y'\sim p_{\nu'}$ as follows:
sample $X\sim p$, sample i.i.d.\ $U_1,\dots,U_d\sim \mathrm{Unif}(0,1)$, set $B_j=\mathbf{1}\{U_j<\nu\}$ and $B'_j=\mathbf{1}\{U_j<\nu'\}$, and define $Y_j=X_j\oplus B_j$ and $Y'_j=X_j\oplus B'_j$.
Then $\mathbb{P}(Y\neq Y')\le \sum_{j=1}^d\mathbb{P}(B_j\neq B'_j)=d|\nu-\nu'|$.
By the coupling characterization of total variation,
$\|p_\nu-p_{\nu'}\|_{\mathrm{TV}}\le \mathbb{P}(Y\neq Y')\le d|\nu-\nu'|$.
\end{proof}

\begin{corollary}[Lower bound on expected swap acceptance]\label{cor:acc-lb}
Assume (idealized) that two adjacent PN levels target $\pi_i=p_{\nu_i}$ and $\pi_{i+1}=p_{\nu_{i+1}}$ for a common underlying distribution $p$.
Then in stationarity,
\[
\mathbb{E}\big[\alpha_{i,i+1}(X,Y)\big]\ge 1-2d|\nu_i-\nu_{i+1}|.
\]

\noindent This worst-case bound is informative only when $2d|\nu_i-\nu_{i+1}|\ll 1$; otherwise the right-hand side becomes vacuous (and may be negative).
\end{corollary}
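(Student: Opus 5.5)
The plan is to chain the two results already established in this appendix: the marginal total-variation lower bound of Corollary~\ref{cor:tv-marginal}, and the distribution-free coupling bound of Proposition~\ref{prop:tv-noise}. Neither step requires new machinery. The only real content is checking that the hypotheses line up.

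First, I will make explicit what "in stationarity" means here. By Proposition~\ref{prop:pn-correctness}, the PN chain leaves the product target $\Pi=\prod_i \pi_i$ invariant. So at stationarity, the pair of states $(X,Y)=(\omega_i,\omega_{i+1})$ held by levels $i$ and $i+1$ just before a swap proposal is distributed as $\mu=\pi_i\otimes\pi_{i+1}$. This is exactly the setting of Proposition~\ref{prop:swap-overlap}, and hence of Corollary~\ref{cor:tv-marginal}. Invoking that corollary gives
\[
\mathbb{E}\big[\alpha_{i,i+1}(X,Y)\big]\ge 1-2\|\pi_i-\pi_{i+1}\|_{\mathrm{TV}}.
\]

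Second, I will use the idealized assumption $\pi_i=p_{\nu_i}$ and $\pi_{i+1}=p_{\nu_{i+1}}$ for a common base distribution $p$ on $\{0,1\}^d$. With this, Proposition~\ref{prop:tv-noise} applies directly with $\nu=\nu_i$ and $\nu'=\nu_{i+1}$, giving $\|\pi_i-\pi_{i+1}\|_{\mathrm{TV}}\le d|\nu_i-\nu_{i+1}|$. Substituting this into the previous display, and noting that the map $t\mapsto 1-2t$ is decreasing, yields the claimed bound $1-2d|\nu_i-\nu_{i+1}|$. The closing remark, that the bound is vacuous unless $2d|\nu_i-\nu_{i+1}|\ll 1$, is immediate because acceptance probabilities are trivially nonnegative.

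The step needing the most care is the first one: justifying that the pre-swap pair is exactly $\pi_i\otimes\pi_{i+1}$. The levels are coupled through past swaps, so independence is not obvious. My approach is to argue that stationarity of the joint chain under $\Pi$ (Proposition~\ref{prop:pn-correctness}, items 1–3) means the full state vector is $\Pi$-distributed at every step. Because $\Pi$ is a product measure, any pair of coordinates is then independent with the correct marginals, whatever the history of swaps. The same reasoning covers the odd–even phases of Algorithm~\ref{alg:pn}: each intermediate kernel preserves $\Pi$, so the state is still $\Pi$-distributed at the moment each individual swap is proposed. A minor additional point is that the molecular categorical channel differs from $K_\nu$. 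The corollary is stated for the binary bit-flip channel, so I will restrict the argument to that case.
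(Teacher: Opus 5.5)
Your proposal is correct and matches the paper's proof, which simply combines Corollary~\ref{cor:tv-marginal} with Proposition~\ref{prop:tv-noise}. Your added argument, that under the stationary product measure $\Pi$ the pre-swap pair is distributed as $\pi_i\otimes\pi_{i+1}$, is a sound spelling-out of what ``in stationarity'' means there.
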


\begin{proof}
Combine Corollary~\ref{cor:tv-marginal} with Proposition~\ref{prop:tv-noise}.
\end{proof}

\subsection{Invariance under potential shifts}

\begin{proposition}[Shift-invariance of swap decisions]\label{prop:shift}
Replacing $\rho_i$ by $\tilde\rho_i=\rho_i+c_i$ for any constants $c_i\in\mathbb{R}$ leaves the swap acceptance probability \eqref{eq:pn-swap} unchanged.
\end{proposition}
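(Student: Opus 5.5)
The plan is a direct substitution into the log-ratio $\Delta_{i,i+1}$ from \eqref{eq:pn-swap}; the constants cancel pairwise. First I will write the shifted exponent
\[
\tilde\Delta_{i,i+1}=\tilde\rho_i(\omega_{i+1})-\tilde\rho_i(\omega_i)+\tilde\rho_{i+1}(\omega_i)-\tilde\rho_{i+1}(\omega_{i+1}).
\]
Next I substitute $\tilde\rho_i=\rho_i+c_i$ and $\tilde\rho_{i+1}=\rho_{i+1}+c_{i+1}$.

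Each potential appears exactly twice in $\tilde\Delta_{i,i+1}$, once with a plus sign and once with a minus sign, evaluated at the two states being exchanged. Hence $c_i$ enters as $c_i-c_i=0$ and $c_{i+1}$ as $c_{i+1}-c_{i+1}=0$. This gives $\tilde\Delta_{i,i+1}=\Delta_{i,i+1}$ for every pair of states $(\omega_i,\omega_{i+1})$. It follows that $\min\{1,\exp(\tilde\Delta_{i,i+1})\}=\min\{1,\exp(\Delta_{i,i+1})\}$ pointwise, so the swap kernel $S_{i,i+1}$ is unchanged.

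As a cross-check I will also argue via the first form of \eqref{eq:pn-swap}. Adding $c_i$ to $\rho_i$ multiplies $\exp(\rho_i)$ by $e^{c_i}$ and $Z_i$ by the same factor. Therefore $\pi_i$ itself is unchanged, and so are the density ratios $\pi_i(\omega_{i+1})/\pi_i(\omega_i)$ and $\pi_{i+1}(\omega_i)/\pi_{i+1}(\omega_{i+1})$. This also shows that the constants may depend on the level, as the statement allows, but not on the state.

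There is no real obstacle. The only point needing care is that the cancellation relies on each $\rho_j$ appearing in a difference of its own values, which holds because the swap exchanges states between fixed levels. I will note that this is why the normalizers $Z_i$, which are intractable for NMLNs, never need to be computed.
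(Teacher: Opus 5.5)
Your proposal is correct and follows the paper's argument: substituting $\tilde\rho_j=\rho_j+c_j$ into $\Delta_{i,i+1}$, each constant enters once with a plus sign and once with a minus sign, so the constants cancel and the acceptance probability is unchanged pointwise. Your cross-check, that the shift leaves each $\pi_i$ itself unchanged, is valid but not needed for the proof.
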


\begin{proof}
In $\Delta_{i,i+1}$, each $\rho_i$ appears once with a plus and once with a minus sign, so additive constants cancel.
\end{proof}

\section{Proofs}\label{app:proofs}
\paragraph{Proof for \Cref{prop:pn-correctness}.}
\begin{proof}
Let $\Omega$ be the (discrete) single-replica state space and $\Omega^N$ the joint state space.
For each noise level $j\in\{1,\dots,N\}$, let $\pi_j:\Omega\to[0,1]$ be the target pmf and
$K_j:\Omega\times\Omega\to[0,1]$ a Markov kernel satisfying invariance
$\sum_{\tilde\omega\in\Omega}\pi_j(\tilde\omega)\,K_j(\tilde\omega,\omega)=\pi_j(\omega)$ for all $\omega\in\Omega$.
All distributions and kernels are over a discrete state space, so we work with probability mass functions and sums.
Let
\[
\Pi(\omega_1,\dots,\omega_N)\;=\;\prod_{j=1}^N \pi_j(\omega_j)
\]
and
\[
K(\tilde\omega,\omega)\;=\;\prod_{j=1}^N K_j(\tilde\omega_j,\omega_j).
\]

\textbf{(1) Product updates.}
For any $\omega=(\omega_1,\dots,\omega_N)$,
\begin{align*}
(\Pi K)(\omega)
&=\sum_{\tilde\omega\in\Omega^N}\Pi(\tilde\omega)\,K(\tilde\omega,\omega)
\\
&=\sum_{\tilde\omega_1,\dots,\tilde\omega_N}\;\prod_{j=1}^N \pi_j(\tilde\omega_j)\,K_j(\tilde\omega_j,\omega_j) \\
&=\prod_{j=1}^N\left(\sum_{\tilde\omega_j\in\Omega}\pi_j(\tilde\omega_j)\,K_j(\tilde\omega_j,\omega_j)\right) \\
&=\prod_{j=1}^N \pi_j(\omega_j)
=\Pi(\omega),
\end{align*}
where we used the assumed invariance of each coordinate kernel,
$\sum_{\tilde\omega_j}\pi_j(\tilde\omega_j)K_j(\tilde\omega_j,\omega_j)=\pi_j(\omega_j)$, and applied repeated distributivity, which allowed us to change the order of summation and multiplication.
Thus $K$ leaves $\Pi$ invariant.

\textbf{(2) Swap updates.}
Fix $i$ and define the swap map $T$ on $\Omega^N$ by exchanging coordinates $i$ and $i{+}1$:
$T(\omega_1,\dots,\omega_i,\omega_{i+1},\dots,\omega_N)=(\omega_1,\dots,\omega_{i+1},\omega_i,\dots,\omega_N)$.
For any state $x\in\Omega^N$,
\[
\frac{\Pi(T(x))}{\Pi(x)}
=\frac{\pi_i(x_{i+1})\,\pi_{i+1}(x_i)}{\pi_i(x_i)\,\pi_{i+1}(x_{i+1})},
\]
so the Metropolis acceptance probability $\alpha(x)=\min\{1,\Pi(T(x))/\Pi(x)\}$ coincides with \eqref{eq:pn-swap}.
The swap proposal is deterministic and involutive ($T(T(x))=x$), hence symmetric, and we have the pointwise identity
\[
\Pi(x)\,\alpha(x)=\min\{\Pi(x),\Pi(T(x))\}=\Pi(T(x))\,\alpha(T(x)).
\]
Since the swap kernel $S_{i,i+1}$ only transitions from $x$ to $T(x)$ (with probability $\alpha(x)$) or stays at $x$,
this equality is exactly the detailed balance condition for the pair $(x,T(x))$, implying $\Pi S_{i,i+1}=\Pi$.

\textbf{(3) Compositions.}
If $\Pi P=\Pi$ and $\Pi Q=\Pi$, then $\Pi(PQ)=(\Pi P)Q=\Pi Q=\Pi$.
Therefore any finite composition of product updates $K$ and swap updates $S_{i,i+1}$ leaves $\Pi$ invariant,
and so does the overall transition kernel used in Algorithm~\ref{alg:pn}.

\textbf{(4) Marginal correctness of the lowest-noise chain.}
By construction, the $\omega_N$-marginal of $\Pi$ is $\pi_N$.
Hence, when the joint chain is at stationarity, $\omega_N\sim \pi_N$.
\end{proof}

\paragraph{Remark on \Cref{prop:swap-overlap}.}
This expected-acceptance/overlap identity is standard in replica exchange; see \citet{kofke2002acceptance} and its erratum \citep{kofke2004erratum}, which give an exact expression for the mean exchange acceptance probability in terms of the overlap of adjacent energy distributions (Eq.~(7) of the original article).
\par

\paragraph{Proof for \Cref{cor:tv-marginal}.}
\begin{proof}
Write $\delta=\pi_i-\pi_{i+1}$. Then
\begin{align*}
\mu-\mu^{\mathrm{swap}}
&=\pi_i\otimes\pi_{i+1}-\pi_{i+1}\otimes\pi_i\\
&=\delta\otimes\pi_{i+1}-\pi_{i+1}\otimes\delta.
\end{align*}
Using $\|A\|_{\mathrm{TV}}=\tfrac12\|A\|_1$, the triangle inequality, and
$\|\delta\otimes\pi\|_1=\|\delta\|_1\|\pi\|_1=\|\delta\|_1$,
\begin{align*}
\|\mu-\mu^{\mathrm{swap}}\|_{\mathrm{TV}}
&\le \tfrac12(\|\delta\|_1+\|\delta\|_1)\\
&=\|\delta\|_1\\
&=2\|\pi_i-\pi_{i+1}\|_{\mathrm{TV}}.
\end{align*}
Apply Proposition~\ref{prop:swap-overlap}.
\end{proof}

\paragraph{Proof for \Cref{prop:tv-noise}.}
\begin{proof}
Construct a coupling $(Y,Y')$ with $Y\sim p_\nu$ and $Y'\sim p_{\nu'}$ as follows:
sample $X\sim p$, sample i.i.d.\ $U_1,\dots,U_d\sim \mathrm{Unif}(0,1)$, set $B_j=\mathbf{1}\{U_j<\nu\}$ and $B'_j=\mathbf{1}\{U_j<\nu'\}$, and define $Y_j=X_j\oplus B_j$ and $Y'_j=X_j\oplus B'_j$.
Then $\mathbb{P}(Y\neq Y')\le \sum_{j=1}^d\mathbb{P}(B_j\neq B'_j)=d|\nu-\nu'|$.
By the coupling characterization of total variation,
$\|p_\nu-p_{\nu'}\|_{\mathrm{TV}}\le \mathbb{P}(Y\neq Y')\le d|\nu-\nu'|$.
\end{proof}

\paragraph{Proof for \Cref{cor:acc-lb}.}
\begin{proof}
Combine Corollary~\ref{cor:tv-marginal} with Proposition~\ref{prop:tv-noise}.
\end{proof}

\paragraph{Proof for \Cref{prop:shift}.}
\begin{proof}
In $\Delta_{i,i+1}$, each $\rho_i$ appears once with a plus and once with a minus sign, so additive constants cancel.
\end{proof}

\end{document}